\theoremstyle{plain}
\newtheorem{theorem}{Theorem}[]
\newtheorem{proposition}[theorem]{Proposition}
\newtheorem{lemma}[theorem]{Lemma}
\newtheorem{corollary}[theorem]{Corollary}
\theoremstyle{definition}
\newtheorem{definition}[theorem]{Definition}
\newtheorem{assumption}[theorem]{Assumption}
\theoremstyle{remark}
\newtheorem{remark}[theorem]{Remark}
\DeclareMathOperator*\Exp{\bf E}
\DeclareMathOperator*\Prob{\bf Pr}
\DeclareMathOperator*\Tr{\rm Tr}
\newcommand{\prn}[1]{\left(#1\right)}
\newcommand{\cprn}[1]{\!\left(#1\right)}
\newcommand{\sqbra}[1]{\left[#1\right]}
\newcommand{\csqbra}[1]{\!\left[#1\right]}
\newcommand{\abs}[1]{\left|#1\right|}
\newcommand{\brkts}[1]{\left\{#1\right\}}
\newcommand{\angles}[1]{\left\langle#1\right\rangle}
\newcommand{\bars}[1]{\left\|#1\right\|}
\newcommand{\cpr}[1]{\Prob\csqbra{#1}}
\newcommand{\cexpt}[1]{\Exp\csqbra{#1}}
\newcommand{\cexptq}[2]{\Exp_{#2}\csqbra{#1}}
\newcommand{\N}{\mathbb{N}}
\renewcommand{\P}{\mathbb{P}}
\newcommand{\R}{\mathbb{R}}
\newcommand{\cA}{\mathcal{A}}
\newcommand{\cD}{\mathcal{D}}
\newcommand{\cI}{\mathcal{I}}
\newcommand{\cN}{\mathcal{N}}
\newcommand{\cS}{\mathcal{S}}
\begin{document}

\title{\bf Efficient, Low-Regret, Online Reinforcement Learning for Linear MDPs}

\author{
Philips George John \\
National University of Singapore
\and
Arnab Bhattacharyya \\
National University of Singapore
\and
Silviu Maniu \\
Universit\'e Grenoble Alpes
\and
Dimitrios Myrisiotis \\
CNRS@CREATE LTD.
\and
Zhenan Wu \\
Huawei Shanghai Research Center
}

\maketitle

\begin{abstract}
Reinforcement learning algorithms are usually stated without theoretical guarantees regarding their performance.
Recently, Jin, Yang, Wang, and Jordan (COLT 2020) showed a polynomial-time reinforcement learning algorithm (namely, LSVI-UCB) for the setting of linear Markov decision processes, and provided theoretical guarantees regarding its running time and regret.
In real-world scenarios, however, the space usage of this algorithm can be prohibitive due to a utilized linear regression step.
We propose and analyze two modifications of LSVI-UCB, which alternate periods of learning and not-learning, to reduce space and time usage while maintaining sublinear regret.
We show experimentally, on synthetic data and real-world benchmarks, that our algorithms achieve low space usage and running time, while not significantly sacrificing regret.
\end{abstract}

\section{Introduction}

\label{sec:introduction}

In reinforcement learning (RL), an agent interacts with an environment by taking actions and receiving rewards.
The objective is to either reach a goal, or maximize cumulative reward over time.
Some well known applications are robot control, game playing, and recommendation systems.
The main components of reinforcement learning are the agent and the environment, which continuously interact with each other.
These algorithms learn by trial and error, and the agent's behavior is shaped by the rewards it receives.

Research on RL has seen many breakthroughs since the Deep Q-Learning paper~\cite{mnih2013playing}.
\cite{mnih2015human} introduced the Deep Q-Network (DQN) algorithm, which utilized a convolutional neural network to approximate the action-value function $Q$.
\cite{mnih2016asynchronous} introduced the Asynchronous Advantage Actor-Critic (A3C) algorithm, which enabled parallelization of RL training across multiple CPU cores.
\cite{schulman2017proximal} introduced the Proximal Policy Optimization (PPO) algorithm, which is a policy optimization method for RL.
PPO strikes a balance between sample efficiency and stability, making it widely adopted in various RL applications.
\cite{silver2018mastering} introduced AlphaZero, an RL algorithm that achieved superhuman performance in the game of Go, chess, and shogi.
AlphaZero demonstrated the power of RL in learning complex strategies without prior human knowledge.

Most RL algorithms use a lot of space, prohibiting their use in cases where the available memory is limited.
Some approaches, such as aggregated memory for RL, aim to improve sample efficiency and resilience to noise by using order-invariant functions in conjunction with standard memory modules.
Additionally, the size of the replay buffer is a crucial hyperparameter in experience replay, which can significantly impact the speed of learning and the quality of the resulting policy.

Closer to our focus, \cite{jin2023provably} introduced and analyzed the Least-Squares Value Iteration with Upper Confidence Bound (LSVI-UCB) algorithm.
The original algorithm, namely LSVI, uses \emph{linear function approximation} for the design of optimal policies for problems with a \emph{very large state space}.
LSVI-UCB adds \emph{provable} performance guarantees based on the notion of \emph{regret}, generally used in the context on multi-armed bandits.
Currently, this is among the few approaches that provide theoretical bounds on RL regret.
The issue with LSVI-type algorithms, and more generally those that are based on linear models, is that they require a large amount of memory to store the samples and perform regressions for estimating the $Q$-values.
This is a significant limitation in practice, as the memory requirements can be prohibitive for large-scale problems, or when one desires to implement these algorithms on low-powered embedded devices, such as those used in Internet of Things scenarios.

\subsection{Our Contributions}

In this paper we propose two variants of the LSVI-UCB algorithm whose objective is to minimize space usage.
The reasons why we focus on LSVI-UCB, compared to other existing approaches, are as follows:
(i) The model used by the LSVI-UCB, linear regression, is simple but still widely applicable in real-world scenarios; and
(ii) LSVI-UCB is an approach that is founded on theoretical guarantees~\cite{jin2023provably}, something that many of the heuristic approaches currently employed in practice (such as those based on neural networks, for example) are lacking.

To achieve better space and time efficiency, we modify LSVI-UCB to alternate periods of learning with periods where samples are discarded, along with a careful reset of the workspace according to the number of episodes that have elapsed.

In particular, our main contributions are as follows:
\begin{enumerate}
\item
We prove that an LSVI-UCB variant where the learning intervals are of \emph{fixed size}, \textsc{LSVI-UCB-Fixed} (\Cref{alg:LSVI-UCB_fixed_reset}), achieves a practical space-to-regret trade-off by appropriately resetting the work space after a fixed number of episodes has passed.
This variant yields better running time and space usage than LSVI-UCB, while maintaining sublinear regret.
\item
The second variant that we introduce \emph{adapts} the length of the learning intervals by keeping track of a proxy measure, that is faster to compute compared to the actual regression step, based on the deviation of the projection matrix used in LSVI-UCB.
We show that this variant, namely \textsc{LSVI-UCB-Adaptive} (see \Cref{alg:LSVI-UCB_adaptive_reset}), minimizes space usage as well and that it has better running time than LSVI-UCB.
As in \Cref{alg:LSVI-UCB_fixed_reset}, this algorithm too utilizes workspace resets that alleviate space usage and empirically keeps regret within reasonable bounds.
\item
We show experimentally that our algorithms minimize space usage and keep regret low, even for arbitrary linear MDPs.
\end{enumerate}
Regarding the significance of the problems that we are considering, one might wonder whether linear MDPs exist in real-world scenarios.
Orthogonally to the above contributions, we take first steps in this direction by taking a real-world benchmark and linearizing (that is, transforming the space-action state to a linear one, by learning a linear relation) it to show that our algorithm can be applied to more general benchmarks (see \Cref{sec:linear}).
As an aside, linearizing real-world problems may be a problem of independent interest.

\subsection{Paper Outline}

We present some background material in \Cref{sec:preliminaries}.
\Cref{sec:overview} presents a technical overview of our proposed algorithms.
We then present experimental evidence about the effectiveness of our proposed algorithms in \Cref{sec:experiments}.
We conclude in \Cref{sec:conclusion} with some interesting open problems.
Regarding our appendices, \Cref{sec:motivation} gives some further motivation regarding \textsc{LSVI-UCB-Adaptive}, and \Cref{sec:plots} includes some extra figures.

\section{Other Related Work}

\label{sec:related-work}

We continue in this section the discussion of related work that we started in \Cref{sec:introduction}.

\paragraph{Linear MDPs.}

\cite{li2021sample-efficient} focus on a scenario with value-based linear representation, which postulates linear realizability of the optimal Q-function (also called the ``linear $Q^*$ problem'').
They make progress towards understanding this linear $Q^*$ problem by investigating a new sampling protocol, which draws samples in an online/exploratory fashion but allows one to backtrack and revisit previous states.
\cite{papini2021reinforcement} study the role of the representation of state-action value functions in regret minimization in finite-horizon MDPs with linear structure.
\cite{wagenmaker2022first} obtain first-order regret bounds scaling not as the worst-case but with some measure of the performance of the optimal policy in the linear MDP setting.
\cite{huang2022towards} propose a formulation for deployment-efficient reinforcement learning (DE-RL) from an ``optimization with constraints'' perspective.
They are interested in exploring an MDP and obtaining a near-optimal policy within minimal deployment complexity, whereas in each deployment the policy can sample a large batch of data.

\cite{zhang2022making} study an alternative definition of linear MDPs that automatically ensures normalization while allowing efficient representation of learning via contrastive estimation.
\cite{he2023nearly} focus on episodic time-inhomogeneous linear Markov decision processes whose transition probability can be parameterized as a linear function of a given feature mapping, and they propose the first computationally efficient algorithm that achieves nearly optimal regret.

\paragraph{UCRL and UCRL2.}

\cite{auer2006logarithmic} present a learning algorithm (UCRL) for undiscounted reinforcement learning.
They use upper confidence bounds to show that UCRL achieves logarithmic online regret in the number of steps taken with respect to an optimal policy.
\cite{auer2008nearOptimal} present a variant of the UCRL algorithm, namely UCRL2.
UCRL2 defines a set of statistically plausible MDPs given the observations so far, and chooses an optimistic MDP (with respect to the achievable average reward) among these plausible MDPs.
Then it executes a policy which is (nearly) optimal for the optimistic MDP.

\paragraph{Bandits.}

A simpler setting than that of reinforcement learning is multi-armed-bandits, where multiple states are not considered (that is, there is only one state).
However, many of the techniques developed for bandits can be adapted to reinforcement learning, especially the notion of regret for linear bandits.
A comprehensive resource on bandits is~\cite{lattimore2020bandit}.

\cite{preil2023genetic} propose a new algorithm that combines concepts from the reinforcement learning domain of multi-armed bandits and random search strategies from the domain of genetic algorithms to solve discrete stochastic optimization problems via simulation.
In particular, their focus is on noisy large-scale problems, which often involve a multitude of dimensions as well as multiple local optima.

\cite{duckworth2023reinforcement} consider the challenging scenario of contextual bandits with continuous actions and large input ``context'' spaces, e.g., images.
They posit that by modifying reinforcement learning algorithms for continuous control, they can outperform hand-crafted contextual bandit algorithms for continuous actions on standard benchmark datasets, i.e. vector contexts.
\cite{baihan2024reinforcement} presents an overview of recent advancements of reinforcement learning and bandits, and discusses how they can be effectively employed to solve speech and natural language processing problems with models that are adaptive and scalable.

\paragraph{Sketching.}

One possible approach to reducing the space complexity of reinforcement learning is to use sketching, especially in the case of linear MDPs.
\cite{bellemare2012sketch-based} investigate the application of hashing to linear value function approximation.
\cite{andreas2017modular} describe a framework for multitask deep reinforcement learning guided by policy sketches.
To learn from sketches, they present a model that associates every subtask with a modular subpolicy, and jointly maximizes reward over full task-specific policies by tying parameters across shared subpolicies.
\cite{tai2018sketching} introduce a new sub-linear space sketch --- the Weight-Median Sketch --- for learning compressed linear classifiers over data streams while supporting the efficient recovery of large-magnitude weights in the model.
This enables memory-limited execution of several statistical analyses over streams, including online feature selection, streaming data explanation, relative deltoid detection, and streaming estimation of pointwise mutual information.
\cite{song2023sketching} propose a novel sketching scheme for the first order method in large-scale distributed learning setting, such that the communication costs between distributed agents are saved while the convergence of the algorithms is still guaranteed.
Using their framework, they develop algorithms for federated learning with lower communication costs.

\paragraph{Other LSVI or LSVI-UCB Extensions.}

\cite{shrivastava2021sublinear} present the first provable Least-Squares Value Iteration (LSVI) algorithm that achieves runtime complexity sub-linear in the number of actions.
They formulate the value function estimation procedure in value iteration as an approximate maximum inner product search problem and propose a locality sensitive hashing (LSH) type data structure to solve this problem with sublinear time complexity.
\cite{zhou2022nonstationary} consider reinforcement learning in episodic MDPs with linear function approximation under drifting environments.
They first develop the LSVI-UCB-Restart algorithm, an optimistic modification of least-squares value iteration combined with periodic restart, and establish its dynamic regret bound when variation budgets are known.
They then propose a parameter-free algorithm, Ada-LSVI-UCB-Restart, that works without knowing the variation budgets, but with a slightly worse dynamic regret bound.

\paragraph{Adaptivity.}

\cite{wang2021provablyEfficient} study RL with linear function approximation under the adaptivity constraint.
They consider two popular limited adaptivity models:
The batch learning model and the rare policy switch model, and propose two efficient online RL algorithms for episodic linear Markov decision processes, where the transition probability and the reward function can be represented as a linear function of some known feature mapping.
\cite{xiong2024aGeneralFramework} take the first step in studying general sequential decision-making under the two adaptivity constraints.
They provide a general class called the Eluder Condition (EC) class, which includes a wide range of reinforcement learning classes.
Then, for the rare policy switch constraint, they provide a generic algorithm to achieve a $\widetilde{O}\cprn{\log K}$ regret on the EC class. For the batch learning constraint, they provide an algorithm that provides a $\widetilde{O}\cprn{\sqrt{K} + K / B}$ regret with the number of batches $B$.

\section{Preliminaries}

\label{sec:preliminaries}

Let $A$ be a matrix.
We denote by $A^\top$ the \emph{transpose of $A$}, by $A^{-1}$ the \emph{inverse of $A$}, and by $\bars{A}_{\rm F}$ the \emph{Frobenius norm of $A$}.
For a vector $v$, we denote by $v^\top$ the \emph{transpose of $v$}.
For vectors $x,y$, we denote the \emph{dot product of $x$ and $y$} by $\angles{x,y}$.

\begin{theorem}[Sherman-Morrison]
\label{thm:sherman-morrison}
Suppose $A \in \R^{n \times n}$ is an invertible square matrix and $u, v \in \mathbb{R}^n$ are vectors.
Then $A + u v^\top$ is invertible if and only if $1 + v^\top A^{-1} u \neq 0$ and
\[
\left(A+uv^\top\right)^{-1}
= A^{-1} - \prn{A^{-1}uv^\top A^{-1}}/\prn{1+v^\top A^{-1}u}.
\]
\end{theorem}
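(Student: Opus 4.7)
The plan is to prove both directions by direct manipulation of the proposed formula, writing $\alpha := v^\top A^{-1} u \in \R$ (a scalar since $v^\top A^{-1} u$ is a $1\times 1$ product). The ``if'' direction is a verification that the claimed $B$ is a right inverse of $A + uv^\top$; the ``only if'' direction is a kernel argument.

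For the forward implication I would assume $1 + \alpha \neq 0$, let $B$ denote the right-hand side of the formula, and compute $(A + uv^\top) B$ by expanding into the four terms
\[
AA^{-1}, \quad -\frac{A A^{-1} u v^\top A^{-1}}{1 + \alpha}, \quad uv^\top A^{-1}, \quad -\frac{u v^\top A^{-1} u v^\top A^{-1}}{1 + \alpha}.
\]
Using $AA^{-1} = I$ and folding $v^\top A^{-1} u = \alpha$ out as a scalar, the last three contributions all become scalar multiples of $uv^\top A^{-1}$ with coefficients $-\tfrac{1}{1+\alpha}$, $1$, and $-\tfrac{\alpha}{1+\alpha}$, which sum to $\tfrac{-1 + (1+\alpha) - \alpha}{1+\alpha} = 0$. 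Hence $(A + uv^\top) B = I$, and since we are in finite dimensions this suffices to conclude $B = (A + uv^\top)^{-1}$.

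For the converse I would assume $1 + \alpha = 0$ and exhibit a nonzero vector in the kernel of $A + uv^\top$. First note that $u \neq 0$, since otherwise $\alpha = 0$ and $1 + \alpha = 1 \neq 0$. Then $A^{-1} u$ is nonzero (as $A^{-1}$ is invertible), and
\[
(A + uv^\top)(A^{-1} u) \;=\; u + u\,(v^\top A^{-1} u) \;=\; (1 + \alpha)\, u \;=\; 0,
\]
so $A + uv^\top$ is singular.

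There is no real obstacle: the whole argument is routine linear algebra. The one thing worth being careful about is the bookkeeping of scalars versus outer products --- in particular, treating $v^\top A^{-1} u$ as a commuting scalar and not confusing it with the rank-one matrix $A^{-1} u v^\top A^{-1}$ that appears alongside it in $B$.
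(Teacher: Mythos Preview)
Your argument is correct and is the standard textbook proof of the Sherman--Morrison formula. Note, however, that the paper does not actually prove \Cref{thm:sherman-morrison}: it is stated in the preliminaries as a classical result and used as a black box (for efficiently updating $\Lambda_{h,k}^{-1}$), so there is no ``paper's own proof'' to compare against. Your verification-plus-kernel argument is exactly what one would supply if a proof were required.
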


We denote by $\Delta([d])$ the $(d-1)$-dimensional simplex in $\R^d$, which is the set of all vectors in $\R^d$ with non-negative entries that sum to $1$.
We use standard asymptotic notation, such as $O\cprn{\cdot}$ and $\Omega\cprn{\cdot}$, whereby $\widetilde O\cprn{\cdot}$ and $\widetilde\Omega\cprn{\cdot}$ hide logarithmic factors and $O_d\cprn{\cdot}$ and $\Omega_d\cprn{\cdot}$ imply that the underlying constant depends on some parameter $d$.

\subsection{Markov Decision Processes (MDPs)}

We adapt here the formalization of~\cite{jin2023provably}.
We define an \emph{episodic Markov Decision Process (episodic MDP)} as a tuple $(\cS, \cA, H, \P, r)$, where
$\cS$ is the set of \emph{states},
$\cA$ is the set of possible \emph{actions},
$H \in \N$ is the \emph{planning horizon},
$\P = \{\P_h\}_{h=0}^{H-1}$, $\P_h: S \times A \rightarrow \Delta(\cS)$, is the set of \emph{transition functions}; where $P_h$ maps each state-action pair $(s_h, a_h)$ to a distribution over $\cS$ which is used to sample the next state $s_{h+1}$,
$r = \{r_h\}_{h=0}^{H-1}$, $r_h: \cS \times \cA \rightarrow \Delta([0, 1])$, is the set of \emph{reward functions}, where $R_h$ maps a state-action pair $(s_h, a_h)$ to a distribution over $[0, 1]$ which is used to sample the stochastic reward $r_h$.

For a \textit{linear MDP} that we consider in this paper, we use the definition of~\cite{jin2023provably}, i.e., an MDP whose transition and reward functions for each step $h$ can be expressed as the dot product of a mapping depending only on state and action with a step-specific vector.

\begin{definition}[Linear MDP]
\label{def::prelims::linear_mdp}
The tuple $(\cS, \cA, H, \P, r)$ is a linear MDP with a feature map $\phi : \cS \times \cA \to \R^d$, if for any $h \in [H]$, there exist $d$ unknown (signed) measures $\mu_h = (\mu_{h,1}, \dots , \mu_{h,d})$ over $\cS$ and an unknown vector $\theta_h \in \R^d$, such that for any $(x, a) \in \cS \times \cA$, we have
$\P_h\cprn{\cdot|x,a}
=\angles{\phi\cprn{x,a},\mu_h}$ and $r_h\cprn{x,a} = \angles{\phi\cprn{x,a},\theta_h}$.
\end{definition}

Note that since the reward functions are bounded in $[0, 1]$, the value functions are bounded in $[0, H]$.
Moreover, we set $T := H K$.

We are interested in analyzing how an agent interacts with some MDP.
Initially, the agent starts at some initial state $s_1$ chosen by an adversary.
At each step $h \in [H]$, the agent observes the state $s_h \in \cS$, picks an action $a_h \in \cA$, receives a reward $r_h(s_h, a_h)$, and the MDP evolves into a new state $s_{h+1}$ that is drawn from the probability measure $\P_h(\cdot|s_h, a_h)$.
The episode terminates when $s_{H+1}$ is reached.

A \emph{policy} $\pi: \cS \times H \to \cA$ is a mapping such that $\pi(s, h)$ gives a probability distribution on $\cA$ conditioned on the MDP state being at state $s$ at the $h$-th step.
For each $h \in [H]$, we define the value function $V_h^\pi : \cS \to \R$ as the expected value of cumulative rewards received under policy $\pi$ when starting from some arbitrary state at the $h$-th step.
That is,
\[
V_h^\pi\cprn{x}
:=\cexpt{\sum_{h'=h}^H r_{h'}\cprn{s_{h'},\pi\cprn{s_{h'},h'}}|s_h=x}
\]
for all $x\in\cS$ and $h\in\sqbra{h}$.
We also define the action-value function $Q^\pi : \cS \times \cA \to \R$ as the expected cumulative rewards when policy $\pi$ comes in only one step later, and we are fixed with action $a$ at current state $s$.
That is,
\begin{align*}
Q_h^\pi\cprn{s,a}
:= r_h\cprn{s,a}
+ \cexpt{\sum_{h'=h+1}^H r_{h'}\cprn{s_{h'},\pi\cprn{s_{h'},h'}}|s_h=s,a_h=a}
\end{align*}
for all $\prn{s,a}\in\cS\times\cA$ and $h\in\sqbra{H}$.

An important property of linear MDPs is that, for any policy, the action-value functions are linear in the feature map $\phi$, so it suffices to focus on linear action-value functions.

\begin{proposition}[\cite{jin2023provably}]
\label{prop:linear_action-value}
For a linear MDP, for any policy $\pi$, there exist weights $\brkts{w_h^\pi}_h\in[H]$ such that for any $(s,a,h) \in \cS \times \cA \times [H]$, we have $Q_h^\pi(s,a) = \angles{\phi\cprn{s,a},w_h^\pi}$.
\end{proposition}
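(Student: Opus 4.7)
The plan is to prove the statement by backward induction on the step index $h$, using the Bellman consistency equation for $Q^\pi$ together with the two linear-structure assumptions of \Cref{def::prelims::linear_mdp}.

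First I would recall the Bellman equation for the action-value function of a fixed policy $\pi$: for every $(s,a,h)\in\cS\times\cA\times[H]$,
\[
Q_h^\pi(s,a) = r_h(s,a) + \cexpt{V_{h+1}^\pi(s_{h+1})\,|\,s_h=s,a_h=a},
\]
with the boundary convention $V_{H+1}^\pi \equiv 0$. The linearity of $r_h$ handles the first term directly, since by assumption $r_h(s,a) = \angles{\phi(s,a),\theta_h}$. For the second term, I would rewrite the conditional expectation as an integral against $\P_h(\cdot|s,a)$ and substitute the linear form $\P_h(\cdot|s,a)=\angles{\phi(s,a),\mu_h}$. Pulling the (fixed in $s,a$) vector $\phi(s,a)$ out of the integration over the next state $s'$ gives
\[
\cexpt{V_{h+1}^\pi(s_{h+1})\,|\,s_h=s,a_h=a}
= \angles{\phi(s,a),\;\int V_{h+1}^\pi(s')\,d\mu_h(s')},
\]
where the inner integral is a vector in $\R^d$ whose $j$-th coordinate is $\int V_{h+1}^\pi(s')\,d\mu_{h,j}(s')$, and does not depend on $(s,a)$.

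Combining the two contributions, I would set
\[
w_h^\pi \;:=\; \theta_h \;+\; \int V_{h+1}^\pi(s')\,d\mu_h(s') \;\in\;\R^d,
\]
which yields $Q_h^\pi(s,a)=\angles{\phi(s,a),w_h^\pi}$. The induction then runs from $h=H$ downward: at $h=H$, $V_{H+1}^\pi\equiv 0$ so $w_H^\pi=\theta_H$; for the inductive step, once $Q_{h+1}^\pi$ (and hence $V_{h+1}^\pi(s)=\sum_{a}\pi(a|s,h+1)Q_{h+1}^\pi(s,a)$) is defined, the above construction produces $w_h^\pi$.

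The main thing to be careful about is not regularity of $V_{h+1}^\pi$: the integral $\int V_{h+1}^\pi\,d\mu_{h,j}$ must be well defined and finite. This is immediate here because the reward functions are bounded in $[0,1]$, so $\abs{V_{h+1}^\pi}\le H$ and the signed measures $\mu_{h,j}$ have finite total variation (which is standard in the linear MDP setup of~\cite{jin2023provably}); hence each coordinate of the integral is bounded, making $w_h^\pi$ a well-defined element of $\R^d$. Beyond this measurability/boundedness check, the argument is essentially a direct unfolding of the Bellman equation under the two linear-structure assumptions, and no optimization or concentration ingredient is needed.
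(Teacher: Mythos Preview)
Your argument is correct and is exactly the standard proof of this fact from~\cite{jin2023provably}: write the Bellman equation $Q_h^\pi(s,a)=r_h(s,a)+\int V_{h+1}^\pi\,d\P_h(\cdot\mid s,a)$, substitute the linear forms for $r_h$ and $\P_h$, and collect the $(s,a)$-independent part into $w_h^\pi=\theta_h+\int V_{h+1}^\pi\,d\mu_h$. One minor remark: the backward induction you set up is not actually needed, since $V_{h+1}^\pi$ is well defined for every $h$ regardless of whether $Q_{h+1}^\pi$ is linear; the construction of $w_h^\pi$ works directly for each $h$ without an inductive hypothesis.

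As for comparison with the paper: the paper does not supply its own proof of this proposition. It is stated as a quoted result from~\cite{jin2023provably} and used as background, so there is no in-paper argument to compare your proposal against.
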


Since the action spaces and the episode length are both finite, there always exists an optimal policy $\pi$ which gives the optimal value $V_h^*(s) = \sup_\pi V_h^\pi(s)$ for all $s \in \cS$ and $h \in [H]$.
The existence of such optimal policies is a topological consequence of fixed point theorems applied to the Bellman transformations~\cite{puterman2014markov}.

\emph{Bellman equations} capture the conditions under which a policy is optimal.
They are as follows:
\begin{align*}
Q_h^*\cprn{s,a}
&=r_h\cprn{s,a}+\cexptq{V_{h+1}^*\cprn{s'}}{s'\sim\P_h\prn{\cdot|s,a}};
\quad
V_h^*\cprn{s}
&=\max_{a\in\cA}Q_h^*\cprn{s,a};
\quad
V_{H+1}^*\cprn{s}
=0.
\end{align*}
This implies that the optimal policy $\pi$ is a greedy policy with respect to the optimal action-value function $\brkts{Q_h}_{h\in[H]}$.

We shall now consider the total accumulated regret of an interaction between an agent and the environment.
For every $k \geq 1$ the adversary picks the initial state $s_{1,k}$ and the agent chooses policy $\pi_k$.
The difference between $V_1^{\pi_k}(s_{1,k})$ and $V_1(s_{1,k})$ is the expected regret of the agent at the $k$-th episode.
Therefore, the total expected regret is equal to
\(
\sum_{k=1}^K \prn{V_1^*\cprn{s_{1,k}} - V_1^{\pi_k}\cprn{s_{1,k}}}.
\)

\section{Technical Contributions}

\label{sec:overview}

We give a high-level overview of the main technical ingredients of LSVI-UCB and of our proposed algorithms.

\subsection{Background: LSVI-UCB}

As mentioned earlier, \cite{jin2023provably} showed that LSVI-UCB can handle problems with a \emph{very large state space}, that runs in \emph{polynomial time} and has \emph{provable} performance guarantees.
This is \Cref{alg:LSVI-UCB}.

\begin{algorithm}[htbt]
\caption{LSVI-UCB~\cite{jin2023provably}.}
\label{alg:LSVI-UCB}
\begin{algorithmic}[1]
\STATE {\bfseries Input:} Access to an MDP, parameters $K$, $H$, $\beta$, $\lambda$.
\STATE {\bfseries Output:} A sequence of policies.
\FOR{episode $k := 1,\dots,K$}
\FOR{step $h := H,\dots,1$}
\label{LSVI-UCB_step-4}
\STATE $\Lambda_{h} := \sum_{i = 1}^{k - 1} \phi\cprn{s_{h,i},a_{h,i}}\phi\cprn{s_{h,i},a_{h,i}}^\top+\lambda I$
\label{LSVI-UCB_step-5}
\STATE $u_{h} := \sum_{i = 1}^{k - 1}\phi\cprn{s_{h,i},a_{h,i}} r_h\cprn{s_{h,i},a_{h,i}}$
\label{LSVI-UCB_step-6}
\STATE $q_i := \max_{a\in\cA} Q_{h+1}\cprn{s_{h+1,i},a}$ for $i \in [k-1]$
\STATE $z_{h}:= \sum_{i = 1}^{k - 1}\phi\cprn{s_{h,i},a_{h,i}} q_i$
\label{LSVI-UCB_step-7}
\STATE $w_{h} := \Lambda_{h}^{-1} \prn{u_{h} + z_{h}}$
\STATE $M := w_{h}^\top \phi\cprn{\cdot,\cdot} + \beta\prn{\phi\cprn{\cdot,\cdot}\Lambda_{h}^{-1}\phi\cprn{\cdot,\cdot}}^{1/2}$
\STATE $Q_{h}\cprn{\cdot,\cdot} := \min\cprn{M,H}$
\label{LSVI-UCB_step-10}
\ENDFOR
\label{LSVI-UCB_step-11}
\STATE Receive the initial state $s_{1,k}$.
\FOR{step $h := 1,\dots,H$}
\label{LSVI-UCB_step-13}
\STATE Take action $a_{h,k} := \arg\max_{a\in\cA} Q_{h}\cprn{s_{h,k},a}$, and observe $s_{h+1,k}$.
\ENDFOR
\label{LSVI-UCB_step-15}
\ENDFOR
\end{algorithmic}
\end{algorithm}

As the reader may observe, each episode $k$ consists of two loops over all steps.
(In what follows, throughout the paper, we use subscript $k$ to refer to the value of the respective quantity at episode $k$.)
The first pass (lines \ref{LSVI-UCB_step-4} to \ref{LSVI-UCB_step-11}) updates the parameters $(w_{h,k}, \Lambda_{h,k})$ that are used to form the action-value function $Q_{h,k}$.
The second pass (lines \ref{LSVI-UCB_step-13} to \ref{LSVI-UCB_step-15}) executes the greedy policy, namely $a_h = \arg\max_{a\in\cA} Q_{h,k}(s_h, a)$, according to the $Q_{h,k}$ obtained in the first pass.
Note that $Q_{H+1}(\cdot, \cdot) := 0$ since the agent receives no reward after the $H$-th step.
For the first episode (where $k = 1$), since the summations in lines \ref{LSVI-UCB_step-5}, \ref{LSVI-UCB_step-6} , and \ref{LSVI-UCB_step-7} are from $i = 1$ to $i = 0$, we have $\Lambda_{h,k} = \lambda I$ and $w_{h,k} = 0$.
Line \ref{LSVI-UCB_step-10} specifies the dependency of the action-value function $Q_{h,k}$ on the parameters $w_{h,k}$ and $\Lambda_{h, k}$.

\begin{remark}
\label{rem:LSVI-UCB}
\cite{jin2023provably} prove the following:
\Cref{alg:LSVI-UCB} runs in time $O_d\cprn{\abs{\cA}K^2}$ and uses space $O_d\cprn{\abs{\cA} K}$.
Moreover, its regret is $\widetilde{O}_d\cprn{\sqrt{K}}$ (with constant probability).
\end{remark}

The time bound comes from the following considerations.
If we compute $\Lambda_{h,k}^{-1}$ by the Sherman-Morrison formula (see \Cref{thm:sherman-morrison}), the running time of \Cref{alg:LSVI-UCB} is dominated by computing $\max_{a\in\cA} Q_{h+1,k}\cprn{s_{h+1,i},a}$ for all $i \in [k - 1]$.
This takes $O(d^2 \abs{\cA} K)$ time per step, which gives a total of $O(d^2 \abs{\cA} K T) = O_d\cprn{\abs{\cA}K^2}$, since there are $O\cprn{T} = O_d(K)$ steps.

The space bound comes from the observation that \Cref{alg:LSVI-UCB} needs to store the values of $\Lambda_{h,k}, w_{h,k}, r_h(s_{h,k}, a_{h,k})$ and $\brkts{\phi(s_{h,k}, a)}_{a\in\cA}$ for all $(h, k) \in [H] \times [K]$, which takes $O(d^2H + d\abs{\cA}T) = O_d\cprn{\abs{\cA} K}$ space.

\begin{remark}[On the space usage of LSVI-UCB]
As we mentioned, one can efficiently compute $\Lambda_{h,k}^{-1}$ from $\Lambda_{h,k-1}^{-1}$ (see \Cref{thm:sherman-morrison}), but the regression solution (episode $k$, step $h$) is $(\Lambda_{h,k})^{-1}\Phi_{h,k}^\top y_{h,k}$, whereby
\[
\Phi_{h,k}=\begin{bmatrix}\phi(s_{h,1},a_{h,1)}\\
\vdots\\
\phi(s_{h,k-1},a_{h,k-1})\end{bmatrix}\in\mathbb{R}^{(k-1)\times d},
\]
and $y_{h,k} \in \mathbb{R}^{(k-1)\times 1}$ is
\[
\begin{bmatrix}
r(s_{h,1},a_{h,1})+\max_{a\in\mathcal{A}}\widehat{Q}_{h+1,k}(s_{h,1},a_{h,1})\\\vdots\\r(s_{h,k-1},a_{h,k-1})+\max_{a\in\mathcal{A}}\widehat{Q}_{h+1,k}(s_{h,k-1},a_{h,k-1})
\end{bmatrix}.
\]
Using the solution above will incur linear in $K$ space, and that the previous $\phi$ vectors are necessary.
It is possible~\cite{yang2020reinforcement} to efficiently compute $(\Lambda_{h,k})^{-1}\Phi_{h,k}^\top$ using sketching, reducing the dimension from $d \times \prn{k-1}$ to $d \times{\rm poly}(d,\log k,1/\varepsilon)$.
However, to get $\varepsilon$-approximate regression using sketching~\cite{woodruff2014sketching}, one needs to sketch $y_{h,k}$ as well, and this is not possible efficiently (without recomputing the sketch on the $\prn{k-1}$-dimensional vector $y_{h,k}$) since $y_{h,k}$ does not depend linearly on $y_{h,k-1}$ due to $\widehat{Q}_{h+1,k}$ versus $\widehat{Q}_{h+1,k-1}$.
One can put the sketch ``inside'' $\widehat{Q}_{h+1,k}$, but that would not give any approximation guarantees for regression, and could in fact lead to an additional linear term in the regret, e.g., with the count-min sketch.
This helps understand why LSVI-UCB is costly in terms of space usage.
\end{remark}

\subsection{LSVI-UCB With Fixed Intervals}

\label{sec:overview-fixed_reset}

\Cref{alg:LSVI-UCB_fixed_reset} presents the pseudocode for LSVI-UCB with \emph{fixed} learning intervals.

\begin{algorithm}[htbt]
\caption{LSVI-UCB with \emph{fixed} learning intervals and \emph{reset}.}
\label{alg:LSVI-UCB_fixed_reset}
\begin{algorithmic}[1]
\STATE {\bfseries Input:} Access to an MDP, parameters $K$, $H$, $\beta$, $\lambda$, $\rho$.
\STATE {\bfseries Output:} A sequence of policies.
\STATE $I_h := \emptyset$ for $h \in [H]$
\STATE $K_0 := 0$
\FOR{episode $k := 1,\dots,K$}
\IF{$k < K_0 + K^{\rho}$}
\FOR{step $h := H,\dots,1$}
\STATE $\Lambda_{h} := \sum_{i \in I_h} \phi\cprn{s_{h,i}, a_{h,i}}\phi\cprn{s_{h,i}, a_{h,i}}^\top+\lambda I$
\STATE $w_{h}^{0} := \sum_{i \in I_h}\phi\cprn{s_{h,i}, a_{h,i}} r_h\cprn{s_{h,i}, a_{h,i}}$
\STATE $q_i := \max_{a\in\cA} Q_{h+1}\cprn{s_{h+1,i}, a}$ for $i \in I_h$
\label{line:dominant-cost-fixed_reset}
\STATE $w_{h}^{1}:= \sum_{i \in I_h} \phi\cprn{s_{h,i},a_{h,i}} q_i$
\STATE $w_{h} := \Lambda_{h}^{-1} \prn{w_{h}^{0} + w_{h}^{1}}$
\STATE $M := w_{h}^\top \phi\cprn{\cdot,\cdot} + \beta\prn{\phi\cprn{\cdot,\cdot}\Lambda_{h}^{-1}\phi\cprn{\cdot,\cdot}}^{1/2}$
\STATE $Q_{h}\cprn{\cdot,\cdot} := \min\cprn{M, H}$
\STATE $I_h := I_h \cup \{k\}$
\ENDFOR
\ELSIF{$k = K_0 + K^{\rho}$}
\STATE $K_0 := K_0 + K^{\rho}$
\STATE Delete the working space; $I_h := \emptyset$ for $h \in [H]$
\ENDIF
\STATE Receive the initial state $s_{1,k}$.
\FOR{step $h := 1,\dots,H$}
\STATE Take action $a_{h,k} := \arg\max_{a\in\cA} Q_{h,k}\cprn{s_{h,k},a}$, and observe $s_{h+1,k}$.
\ENDFOR
\ENDFOR
\end{algorithmic}
\end{algorithm}

This algorithm imitates LSVI-UCB and resets the workspace every $K^\rho$ episodes to save space.
In particular, we show in \Cref{prop:fixed_reset} that in this way we achieve a practical space-to-regret trade-off that yields sub-linear space usage and regret.

\begin{proposition}
\label{prop:fixed_reset}
\Cref{alg:LSVI-UCB_fixed_reset} runs in time $O_d\cprn{\abs{\cA} K^{1 + \rho}}$ and uses space $O_d\cprn{\abs{\cA} K^\rho}$.
Moreover, its regret is $\widetilde{O}_d\cprn{K^{1 - \rho / 2}}$ (with constant probability).
\end{proposition}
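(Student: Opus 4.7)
The plan is to decompose the $K$ episodes into $B := \ceils{K^{1-\rho}}$ consecutive \emph{blocks}, each of length at most $K^\rho$. Because the workspace is wiped at block boundaries, within a single block \Cref{alg:LSVI-UCB_fixed_reset} coincides exactly with \Cref{alg:LSVI-UCB} executed for $K^\rho$ episodes from a fresh (empty) workspace. I would first verify this reduction directly: immediately after each reset, $I_h = \emptyset$, so $\Lambda_h$, $w_h^0$, $w_h^1$, and hence $w_h$ and $Q_h$, depend only on samples collected inside the current block, matching the semantics of \Cref{alg:LSVI-UCB} applied to that block in isolation.

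The time and space bounds then follow by invoking \Cref{rem:LSVI-UCB} with $K' := K^\rho$. At any point in time the algorithm stores only the data associated with the current block, which gives space $O_d\cprn{\abs{\cA} K^\rho}$. Each block costs $O_d\cprn{\abs{\cA} \cprn{K^\rho}^2} = O_d\cprn{\abs{\cA} K^{2\rho}}$ time, and summing over the $B = \Theta\cprn{K^{1-\rho}}$ blocks yields $O_d\cprn{\abs{\cA} K^{1+\rho}}$ total time.

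For regret, \Cref{rem:LSVI-UCB} applied per block gives $\widetilde{O}_d\cprn{\sqrt{K^\rho}} = \widetilde{O}_d\cprn{K^{\rho/2}}$ regret inside each block, and summing over $B$ blocks yields $\widetilde{O}_d\cprn{K^{1-\rho} \cdot K^{\rho/2}} = \widetilde{O}_d\cprn{K^{1-\rho/2}}$, as claimed. The main obstacle I anticipate is that \Cref{rem:LSVI-UCB} only asserts a \emph{constant} success probability per block, so a naive union bound over $B$ blocks collapses. I would resolve this by appealing to the sharper form of the analysis of \cite{jin2023provably}: their bound in fact holds with probability $1-\delta$ provided $\beta$ is inflated by a $\sqrt{\log\cprn{1/\delta}}$ factor, and the resulting $\sqrt{\log\cprn{1/\delta}}$ in the regret is absorbed by $\widetilde{O}_d\cprn{\cdot}$. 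Setting $\delta$ to a sufficiently small constant multiple of $1/B$ and union-bounding then preserves constant overall success probability while leaving the final bound unchanged. A minor subtlety is that block-initial states are still chosen adversarially across blocks, but this poses no problem because \Cref{alg:LSVI-UCB}'s guarantee already allows the adversary to pick the initial state of every episode.
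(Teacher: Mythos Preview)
Your proposal is correct and follows essentially the same block-decomposition argument as the paper's own sketch: partition the $K$ episodes into $K^{1-\rho}$ blocks of length $K^\rho$, invoke the LSVI-UCB guarantees per block, and sum. Your treatment is in fact slightly more careful than the paper's, which does not explicitly address the union bound over blocks that you handle via the $\delta$-dependent form of the bound in \cite{jin2023provably}.
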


\begin{proof}[Proof (Sketch)]
The space usage of \Cref{alg:LSVI-UCB_fixed_reset} is $O_d\cprn{\abs{\cA} K^\rho}$ by adjusting the discussion regarding the space usage of \Cref{alg:LSVI-UCB}.
Under the same token, the regret bound of \Cref{alg:LSVI-UCB_fixed_reset} is $K^{1 - \rho} \cdot \widetilde{O}_d\cprn{\sqrt{K^\rho}} = \widetilde{O}_d\cprn{K^{1 - \rho / 2}}$ since there are $K^{1 - \rho}$ intervals for which \Cref{alg:LSVI-UCB_fixed_reset} incurs $\widetilde{O}_d\cprn{\sqrt{K^\rho}}$ regret (by the correctness of LSVI-UCB for $K^\rho$ episodes).
This yields the desired space-to-regret trade-off.
Finally, the running time of \Cref{alg:LSVI-UCB_fixed_reset} (by adjusting the respective discussion of \Cref{alg:LSVI-UCB}) is $O_d\cprn{\abs{\cA} K^{1 + \rho}}$, since each of the $O\cprn{T} = O_d\cprn{K}$ computations of lines 10 and 12 incurs a cost of $O\cprn{d^2 \abs{\cA} K^\rho} = O_d\cprn{\abs{\cA} K^\rho}$.
\end{proof}

Compared to \Cref{alg:LSVI-UCB}, \Cref{alg:LSVI-UCB_fixed_reset} uses less space (since $K^\rho \ll K$) and achieves smaller running time (since $K^{1 + \rho} \ll K^2$), while maintaining sublinear in $K$ regret.

\subsection{LSVI-UCB With Adaptive Intervals}

\label{sec:overview-adaptive_reset}

We also study an adaptive variant of LSVI-UCB (see \Cref{alg:LSVI-UCB_adaptive_reset}).

\begin{algorithm}[htbt]
\caption{LSVI-UCB with \emph{adaptive} learning intervals and \emph{reset}.}
\label{alg:LSVI-UCB_adaptive_reset}
\begin{algorithmic}[1]
\STATE {\bfseries Input:} Access to an MDP, parameters $K$, $H$, $\beta$, $\lambda$, $m$, $\tau$, $\mathsf{Budget}$, $\rho$.
\STATE {\bfseries Output:} A sequence of policies.
\STATE $I_h := \emptyset$; $\mathsf{LearnIts}[h], \mathsf{TotIts}[h] := 0$ for $h \in [H]$
\FOR{episode $k := 1,\dots,K$}
\FOR{step $h := H,\dots,1$}
\STATE $L_{h,k} := {\sum_{i = 1}^{k - 1} \phi\cprn{s_{h,i},a_{h,i}}\phi\cprn{s_{h,i},a_{h,i}}^\top+\lambda I}$
\STATE \COMMENT{We store $L_{h,k},L_{h,k-1},\ldots,L_{h,k-m}$ \emph{only}.}
\label{line:L_reset}
\IF{$\mathsf{LearnIts}[h] < \mathsf{Budget}$ {\bf and} $\mathsf{TotIts}[h] < K^\rho$}
\STATE $\mathsf{TotIts}[h] := \mathsf{TotIts}[h] + 1$
\IF{$\textsc{Learn}\cprn{\brkts{L_{h,i}^{-1}}_{i=k-m}^k,\tau}$}
\STATE $\mathsf{LearnIts}[h] := \mathsf{LearnIts}[h] + 1$
\STATE $\Lambda_{h} := {\sum_{i \in I_{h}} \phi(s_{h,i},a_{h,i}) \phi(s_{h,i}, a_{h,i})^\top + \lambda I}$
\label{line:Lambda_reset}
\STATE $w_{h}^{0} := \sum_{i \in I_{h}}\phi\cprn{s_{h,i},a_{h,i}} r_h\cprn{s_{h,i},a_{h,i}}$
\STATE $q_i := \max_{a\in\cA} Q_{h+1}\cprn{s_{h+1,i},a}$ for $i \in I_h$
\label{line:dominant-cost-adaptive_reset}
\STATE $w_{h}^{1}:= \sum_{i \in I_{h}}\phi\cprn{s_{h,i},a_{h,i}} q_i$
\STATE $w_{h} := \Lambda_{h}^{-1} \prn{w_{h}^{0} + w_{h}^{1}}$
\STATE $M := w_{h}^\top \phi\cprn{\cdot,\cdot} + \beta\prn{\phi\cprn{\cdot,\cdot}\Lambda_{h}^{-1}\phi\cprn{\cdot,\cdot}}^{1/2}$
\STATE $Q_{h}\cprn{\cdot,\cdot} := \min\cprn{M, H}$
\STATE $I_h := I_h \cup \{k\}$
\ENDIF
\ELSE
\STATE Delete the working space.
\STATE $I_h := \emptyset$; $\mathsf{LearnIts}[h], \mathsf{TotIts}[h] := 0$ for $h \in [H]$
\ENDIF
\ENDFOR
\STATE Receive the initial state $s_{1,k}$.
\FOR{step $h := 1,\dots,H$}
\STATE Take action $a_{h,k} := \arg\max_{a\in\cA} Q_{h}\cprn{s_{h,k},a}$, and observe $s_{h+1,k}$.
\ENDFOR
\ENDFOR
\end{algorithmic}
\end{algorithm}

The main difference here (compared to \Cref{alg:LSVI-UCB_fixed_reset}) is that it learns a new policy in step $h$ of episode $k$ only when (i) $\mathsf{LearnIts}[h]$ (which counts the number of episodes where learning happened in step $h$) is at most $\mathsf{Budget} \ll K$ and $\mathsf{TotIts}[h]$ (which counts the total number of episodes in step $h$) is at most $ K^\rho$, and (ii) the current projection matrix $\Lambda_{h,k}^{-1}$ (as used by LSVI-UCB) significantly deviates from the $\Lambda_{h,i}^{-1}$ used in the previous $m$ episodes, meaning that
\begin{equation}
\max_{i, j \in \{k - m, k - m + 1, \ldots, k\}}
\| \Lambda_{h,i}^{-1} - \Lambda_{h,j}^{-1} \|_{\rm F}
\geq \tau,
\label{eq:condition_reset}
\end{equation}
for some threshold $\tau$.
This condition is checked by
\(
\textsc{Learn}
\).
The motivation behind \Cref{eq:condition_reset} is drawn from the following observation (proved in \Cref{sec:motivation}).
 
\begin{proposition}
\label{prop:motivation}
The convergence of the sequence $\brkts{\Lambda_{h, k}^{-1}}_{h, k}$ in the operator norm implies that the action-value function learned by \Cref{alg:LSVI-UCB_adaptive_reset} is close in absolute value to the one learned by \Cref{alg:LSVI-UCB}.
Moreover, under some assumptions on the underlying MDP, the sequence $\brkts{\Lambda_{h, k}^{-1}}_{h, k}$ converges in the operator norm.
\end{proposition}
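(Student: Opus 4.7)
My plan is to handle the two assertions separately and to proceed by backward induction for the first. For the implication, I would couple LSVI-UCB (\Cref{alg:LSVI-UCB}) with \Cref{alg:LSVI-UCB_adaptive_reset} on the same realization of environment randomness and initial states, and argue by induction on $h$ from $H + 1$ down to $1$ that $\sabs{Q_{h,k}^{\mathrm{ad}}\cprn{s,a} - Q_{h,k}^{\mathrm{lsvi}}\cprn{s,a}}$ is uniformly small in $(s,a)$. The base case $h = H + 1$ is vacuous since both functions vanish by convention.

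For the inductive step, at episode $k$ and step $h$, \Cref{alg:LSVI-UCB_adaptive_reset} uses parameters $(w_{h,k_0}, \Lambda_{h,k_0}^{-1})$ frozen at the last learning episode $k_0 \le k$. The \textsc{Learn} guard together with \Cref{eq:condition_reset} bounds $\bars{\Lambda_{h,k}^{-1} - \Lambda_{h,k_0}^{-1}}_{\rm F} < \tau$, and combining this with the assumed operator-norm convergence of $\brkts{\Lambda_{h,k}^{-1}}_k$ forces the gap between the projection matrices used by the two algorithms to be arbitrarily small for large $k$. Expanding $w_{h,k} = \Lambda_{h,k}^{-1}\prn{u_{h,k} + z_{h,k}}$ on both sides, the discrepancy in $w_{h,k}$ can be controlled via (i) the operator-norm estimate on $\Lambda_{h,k}^{-1}$, (ii) the induction hypothesis on $Q_{h+1,k}$ which enters through the $q_i$ entries of $z_{h,k}$, and (iii) boundedness of $\phi$. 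Finally, $Q_{h,k}$ depends Lipschitz-continuously on $(w_{h,k}, \Lambda_{h,k}^{-1})$ through $w_h^\top \phi\cprn{\cdot,\cdot} + \beta\prn{\phi\cprn{\cdot,\cdot}\Lambda_h^{-1}\phi\cprn{\cdot,\cdot}}^{1/2}$, and the clip $\min\cprn{\cdot,H}$ is a contraction that does not enlarge the error.

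For the second assertion, I would impose the standard regularity of \cite{jin2023provably}, namely $\bars{\phi\cprn{s,a}}_2 \le 1$, together with an exploration condition ensuring that the smallest eigenvalue of $\sum_{i=1}^{k-1} \phi\cprn{s_{h,i},a_{h,i}} \phi\cprn{s_{h,i},a_{h,i}}^\top$ grows at least linearly in $k$. Under this condition, $\lambda_{\min}\cprn{\Lambda_{h,k}} = \Omega\cprn{k}$, and hence $\bars{\Lambda_{h,k}^{-1}}_{\mathrm{op}} = O\cprn{1/k} \to 0$, i.e., the sequence converges in operator norm to the zero matrix. If a nondegenerate limit is required, one instead works with the rescaled matrix $\Lambda_{h,k}/k$ and invokes a law-of-large-numbers argument (with care, since the policies, and hence the sampling distributions, depend on the past) to conclude convergence to the limiting second-moment matrix of the features under the long-run occupancy distribution.

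The hardest part is the backward induction: the two coupled algorithms may, a priori, take different greedy actions along the way, and coupling their trajectories cleanly would require that $\arg\max_a Q_{h,k}\cprn{s,a}$ agrees across the two executions, which in turn demands a margin between the best and second-best $Q$-values that exceeds the induced approximation error. Bypassing this either requires an explicit margin assumption on the MDP, a perturbation bound on the total variation between the two induced on-policy state distributions, or reframing the comparison in terms of value-function differences rather than pointwise $Q$-function differences. Managing this coupling and the resulting interaction between approximation error and the argmax is where the real technical work sits.
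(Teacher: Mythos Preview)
Your route for the first assertion---backward induction on $h$ with a trajectory coupling---is different from the paper's and runs into precisely the difficulty you flag at the end. The paper sidesteps that difficulty entirely by a simpler reduction: it does \emph{not} compare the two algorithms step by step. Instead, it observes that during a non-learning interval the adaptive algorithm freezes $\widehat{Q}_{h,k} = \widehat{Q}_{h,k'}$, where $k'$ is the last learning episode, and (somewhat informally) identifies this with $Q_{h,k'}$ from LSVI-UCB. The task then reduces to bounding $\bars{Q_{h,k} - Q_{h,k'}}_\infty$ \emph{within LSVI-UCB alone}: first show $\bars{w_{h,k+1} - w_{h,k}}_2 = O_d(1/k)$ from the assumed rate $\bars{\Lambda_{h,k}^{-1} - \Lambda_{h,k+1}^{-1}}_{2\to2} \leq c/k^2$, then telescope across the episodes between $k'$ and $k$. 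Because this is a single-algorithm, single-trajectory comparison, the argmax-disagreement issue never arises, and no induction on $h$ is needed: the paper treats each $h$ in isolation and controls the contribution of $Q_{h+1}$ to $w_h$ via the crude bound $z_{h,k} \in [0, H+1]$ rather than tracking it recursively. Your approach is not wrong in principle, but the coupling worry you raise is an artifact of the decomposition you chose, not of the problem itself.

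For the second assertion your plan is close in spirit. The paper, however, uses more specific assumptions than a generic exploration condition: a fixed state-sampling distribution (so that the feature vectors $\phi(s_{h,i},a_{h,i})$ are i.i.d.\ across $i$) together with Gaussian features $\phi(s,a) \sim \cN(0,\Sigma)$, $\Sigma \succ 0$. These allow a Vershynin-type concentration bound on the empirical covariance to give $\lambda_{\min}(\Lambda_{h,k}) = \Omega(k)$ with high probability, after which Sherman--Morrison yields $\bars{\Lambda_{h,k}^{-1} - \Lambda_{h,k+1}^{-1}}_{2\to2} = O_d(1/k^2)$. The paper does not attempt the law-of-large-numbers argument under policy-dependent sampling that you sketch; it simply assumes that dependence away.
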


In a learning iteration of \Cref{alg:LSVI-UCB_adaptive_reset}, the learner learns a new policy exactly as in LSVI-UCB, but only using the data stored in the previous learning iterations.
In a non-learning iteration $(k, h)$, where $\textsc{Learn}$ returns False, the learner takes actions using the policy which was learnt in the last iteration where learning took place, but does not store any data other than $L_{h,k}$ (and this only for the next $m$ iterations).

\begin{proposition}
\label{prop:adaptive_reset}
\Cref{alg:LSVI-UCB_adaptive_reset} runs in time $O_d\cprn{\abs{\cA} K \mathsf{Budget}}$ and uses space $O_d(|\cA| \mathsf{Budget})$.
\end{proposition}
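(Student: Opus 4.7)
The plan is to reduce the analysis to a per-step-per-episode accounting of cost and storage under the invariant that $|I_h| \leq \mathsf{Budget}$ for every $h \in [H]$ throughout execution. First I would establish this invariant: an index is appended to $I_h$ only inside the inner $\textsc{Learn}$-guarded branch, and entering that branch requires the outer guard $\mathsf{LearnIts}[h] < \mathsf{Budget}$ together with the increment of $\mathsf{LearnIts}[h]$ performed immediately before the append. A reset (triggered once $\mathsf{LearnIts}[h] = \mathsf{Budget}$ or $\mathsf{TotIts}[h] = K^\rho$) flushes $I_h$ and the working space back to $\emptyset$, so $|I_h|$ can never exceed $\mathsf{Budget}$.

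For the space claim I would then catalog, per step $h$, what must remain in memory: (i) the samples indexed by $I_h$---feature vectors, rewards, and the pre-computed images $\{\phi(s_{h+1,i}, a)\}_{a \in \cA}$ needed later to evaluate $\max_a Q_{h+1}$---contributing $O\cprn{d \abs{\cA} |I_h|}$; (ii) the sliding window $\{L_{h,j}^{-1}\}_{j = k - m}^{k}$ of $m + 1$ matrices of size $d \times d$, contributing $O\cprn{d^2 m}$; and (iii) the current $w_h$ and $\Lambda_h^{-1}$ (which together encode $Q_h$), contributing $O\cprn{d^2}$. Summing over $h \in [H]$ and substituting the invariant yields $O_d\cprn{\abs{\cA} \mathsf{Budget}}$.

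For the time claim I would argue per $(h, k)$ pair. Maintaining $L_{h, k}^{-1}$ incrementally from $L_{h, k - 1}^{-1}$ via \Cref{thm:sherman-morrison} costs $O\cprn{d^2}$ per step. The \textsc{Learn} call compares $O\cprn{m^2}$ pairs of stored inverses in Frobenius norm at $O\cprn{d^2}$ per pair. When the inner branch fires, the dominant cost is line \ref{line:dominant-cost-adaptive_reset}: computing $q_i = \max_{a \in \cA} Q_{h + 1}\cprn{s_{h + 1, i}, a}$ for every $i \in I_h$ performs $|I_h|$ maxima, each requiring $\abs{\cA}$ UCB evaluations of cost $O\cprn{d^2}$, totalling $O\cprn{d^2 \abs{\cA} |I_h|} = O_d\cprn{\abs{\cA} \mathsf{Budget}}$; action selection adds a further $O\cprn{d^2 \abs{\cA}}$ per step. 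Charging each of the $HK$ step-episode pairs this worst case then yields overall running time $O_d\cprn{\abs{\cA} K \mathsf{Budget}}$.

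The step I expect to be the main obstacle is making the reset accounting truly airtight: I need to confirm that the ``delete the working space'' action in the \textsc{Else} branch frees every $I_h$-indexed array (not merely the two counters), so that storage does not silently accumulate across epochs, and that the action-specific feature images are refreshed consistently with the current $I_h$. With these bookkeeping points secured, the summation over $(h, k)$ pairs is routine and mirrors the argument used for \Cref{prop:fixed_reset}.
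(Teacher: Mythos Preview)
Your proposal is correct and follows essentially the same approach as the paper's proof sketch: both argue that $|I_h|\le\mathsf{Budget}$ throughout, identify line~\ref{line:dominant-cost-adaptive_reset} as the dominant $O(d^2|\cA|\,\mathsf{Budget})$ per-step cost across the $O_d(K)$ steps, and catalogue the stored objects (the $(m{+}1)$ matrices $L_{h,\cdot}^{-1}$, the pair $(\Lambda_h^{-1},w_h)$, and the feature/reward data indexed by $I_h$) to obtain the space bound. The only item you leave implicit that the paper states explicitly is the assumption $m=O(d)$, which is what lets the $O(d^2mH)$ contribution from the sliding window be absorbed into the $O_d(\cdot)$ notation.
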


\begin{proof}[Proof (Sketch)]
The running time of \Cref{alg:LSVI-UCB_adaptive_reset} (by adjusting the discussion regarding the running time of \Cref{alg:LSVI-UCB}) is $O_d\cprn{\abs{\cA} K \mathsf{Budget}}$, since each of the $O\cprn{T} = O_d\cprn{K}$ computations of lines 14 and 16 incurs a cost of $O\cprn{d^2 \abs{\cA} \mathsf{Budget}} = O_d\cprn{\abs{\cA} \mathsf{Budget}}$.

Let us now argue about the space bound.
Suppose that $m = O(d)$.
Let $I_{\rm full}$ denote $\bigcup_{h \in [H]} I_{h}$ after $\mathsf{Budget}$ episodes.
By construction, we have $|I_{\rm full}| \leq O(H \mathsf{Budget})$.
Moreover, $L_{h,i}^{-1}$ can be computed efficiently from $L_{h,i-1}^{-1}$ using the Sherman-Morrison formula (as can $\Lambda_h^{-1}$), and the algorithm only needs to store at most $m+1$ of them, namely $\{L_{h,i}\}_{i=k-m}^{k}$, at any time.
The algorithm also needs to store the values of $\Lambda_{h}^{-1}$, $w_h$, as well as that of $\{\phi(s_{h,i},a)\}_{i \in I_{\rm full},h \in [H],a \in \cA}$ and $\{r_h(s_{h,i},a_{h,i})\}_{h \in [H], i \in I_h}$.
This yields a $O(d^2 H + |\cA|d H^2 \mathsf{Budget}) = O_d(|\cA| \mathsf{Budget})$ space bound, as claimed.
\end{proof}

That is, the space usage $O_d(|\cA| \mathsf{Budget})$ of \Cref{alg:LSVI-UCB_adaptive_reset} is much less compared to the respective $O_d(|\cA| K)$ bound of \Cref{alg:LSVI-UCB}, since $\mathsf{Budget} \ll K$.
The same holds true for the running time, as $O_d(|\cA| K \mathsf{Budget}) \ll O_d(|\cA| K^2)$.

Moreover, our experimental study in \Cref{sec:experiments} shows that \Cref{alg:LSVI-UCB_adaptive_reset} has a regret that is indistinguishable from that of \Cref{alg:LSVI-UCB} and \Cref{alg:LSVI-UCB_fixed_reset} (which is sublinear in $K$).

\section{Experiments}

\label{sec:experiments}

\subsection{Environments (Datasets)}

\label{sec:expts:envs}

We run our experiments on \emph{synthetic} linear MDPs (randomly generated parameters) as well as on \emph{linearized} versions of some standard RL environments (from OpenAI \textsf{gym}) with finite action spaces.
Finite action spaces are required for LSVI-UCB itself and also for our variants, which leads us to focus on the \textsf{ALE} environments (\textsf{Atari 2600} games).
We use the \textsf{-ram} version of these environments, where the state is the 128 byte RAM contents of the device rather than an RGB/grayscale image of the game screen.

We implement our RL algorithms in Python.
The algorithms themselves are implemented using only \textsf{numpy} and \textsf{scipy} with \textsf{joblib} used for CPU-level parallelization.
The linearization process and the representations used in the linearized environments are implemented using \textsf{pytorch} and benefit from GPU-level parallelization.

\subparagraph{Synthetic Linear MDP Environments}

We run most of our experiments on a synthetic linear MDP environment, with $\cS = [500]$, $\cA = [15]$, $d = 30$ (feature-embedding dimension) and $H = 50$ (planning horizon).
This synthetic MDP is generated with random features ($\phi(s,a)$ is sampled uniformly from the simplex $\Delta([d])$, independently for each $(s, a) \in \cS \times \cA$).
Similarly, the reward weight vectors ($\theta_h$, $h \in [H]$, such that $r_h(s,a) = \langle \theta_h, \phi(s,a)\rangle$) are also independently sampled uniformly at random from the $\Delta([d])$ simplex for each $h \in [H]$.
Finally, the transition-distribution measures $\mu_{h,1},\ldots,\mu_{h,d}$ (see \Cref{def::prelims::linear_mdp}) are also independently sampled uniformly from the $\Delta(\cS)$ simplex (i.e., they are discrete probability distributions on $\cS$) for each step $h$.

\subparagraph{Linearized Environments}

\label{sec:linear}

The linearization process is done using the Contrastive Representation Learning (CTRL) approach of~\cite{zhang2022making}.
We run their training algorithm with exploration for $50{,}000$ iterations (with $1{,}000$ iterations of pure exploration) and take the representations of $\phi$ (feature mapping), $\mu$ (transition measures mapping) and $\theta$ (reward weights) learnt at the end; the representations are in the form of neural networks with ReLu activations.
These representations give us a linear MDP environment that approximates the original environment.

We obtain the linear environments used in our experiments by \emph{linearizing}, as described above, the \textsf{ALE/Alien-ram-v5} and \textsf{ALE/Phoenix-ram-v5} environments~\cite{machado18arcade} from the \textsf{ale-py} package.
For brevity, we will refer to these \emph{linearized} environments as \textsf{Alien} and \textsf{Phoenix} respectively.
Adapting the implementation of~\cite{zhang2022making}, we use $d = 512$ while linearizing and take $H = 50$ while learning.

\begin{remark}
\cite{hwang2023modelBased} show that there is some state-action feature map that cannot induce linear MDPs, since when a feature map is provided, it is impossible to determine whether linear MDPs can be constructed using that map.
\end{remark}

\subsection{Algorithms}

\label{sec:expts:algs}

We consider the following online RL algorithms:
\begin{enumerate}
\item
\textbf{\textsc{LSVI-UCB}}:
This is the standard LSVI-UCB algorithm proposed in~\cite{jin2023provably}, which we use as a baseline for comparison.
This enables us to measure the regret vs. efficiency tradeoffs (both time and space) that we get when using our proposed algorithms.
LSVI-UCB has two hyperparameters, namely $\lambda$ and $\beta$, which appear in our algorithms as well.
We set them as proposed in~\cite{jin2023provably} to obtain sublinear $\widetilde{O}(\sqrt{K})$ regret bounds, without further studying their effect.
\item
\textbf{\textsc{LSVI-UCB-Fixed}}:
This is our proposed \Cref{alg:LSVI-UCB_fixed_reset}.
In the experiments, we try various values of $\rho \in [0.5,0.75]$ (phase length, i.e., the amount of episodes between resets, is $K^\rho$).
\item
\textbf{\textsc{LSVI-UCB-Adaptive}}:
This is our \Cref{alg:LSVI-UCB_adaptive_reset}.
In addition to $\lambda$ and $\beta$, it has these additional hyperparameters:
$m$ (the lookback period), $\tau$ (the pairwise-closeness threshold in the \textsc{Learn} condition), \textsf{Budget} (the maximum number of learning episodes in each phase, for each $h$), and $\rho$ (whereby $K^\rho$ is the maximum number of episodes in a phase).
We set $m$ to fixed values between $10$ and $50$, and we set $\tau := \tau_c \cdot d^2$ for various positive constants $\tau_c$ (this is because the matrices involved have $d^2$ entries and we consider closeness in the Frobenius norm).
We similarly set the \textsf{Budget} as $K^c$ for constants $c \in [0.5, 0.75]$, and set $\rho \geq c$ in this interval to measure the time-space-regret tradeoffs.
\end{enumerate}

\subsection{Synthetic Linear Environment Experiments}

First, we compare the regret of \textsc{LSVI-UCB-Fixed} and \textsc{LSVI-UCB-Adaptive} to the regret of LSVI-UCB as the baseline, in \Cref{fig:regret_synthetic}.

\begin{figure}
\centering
\includegraphics[width=0.65\columnwidth]{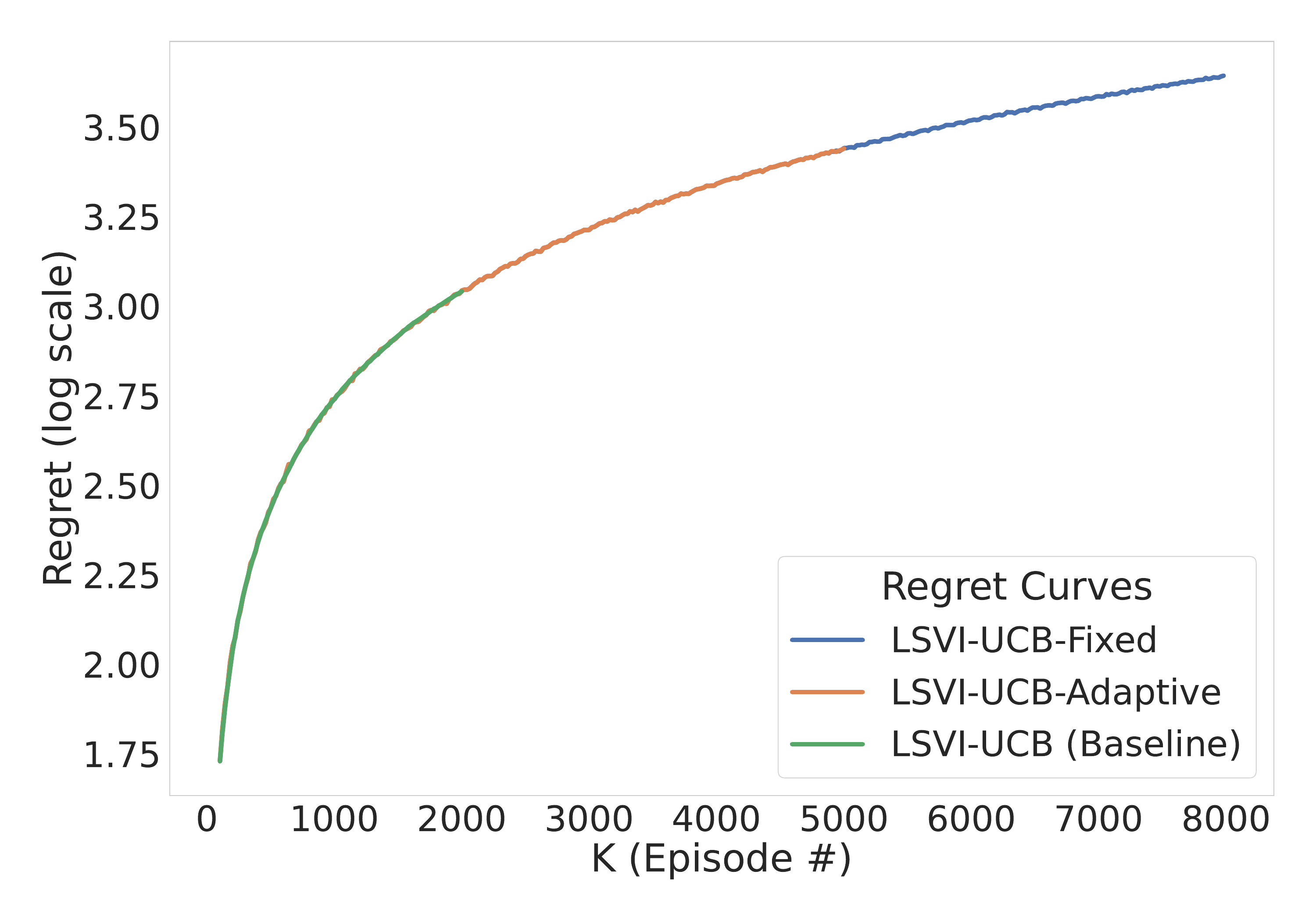}
\caption{Synthetic data: Regret curve.}
\label{fig:regret_synthetic}
\end{figure}

One can immediately notice that the cumulative regret is essentially indistinguishable from the regret achieved by LSVI-UCB, which is expected since our algorithms are designed to keep the regret low.

By design, both of our algorithms \textsc{LSVI-UCB-Fixed} and \textsc{LSVI-UCB-Adaptive}, are much more efficient in terms of space compared to LSVI-UCB.
Note that LSVI-UCB uses space which is linear in $K$ (the number of episodes), whereas our algorithms always use space which is sublinear in $K$ (exact space complexity depending on the parameters) --- see \Cref{sec:overview-fixed_reset} and \Cref{sec:overview-adaptive_reset}.
We get empirical verification of this on synthetic data, as well as an idea of the substantial improvement in the sizes of the data structures (\Cref{fig:space_synthetic}).

\begin{figure}
\centering
\includegraphics[width=0.65\columnwidth]{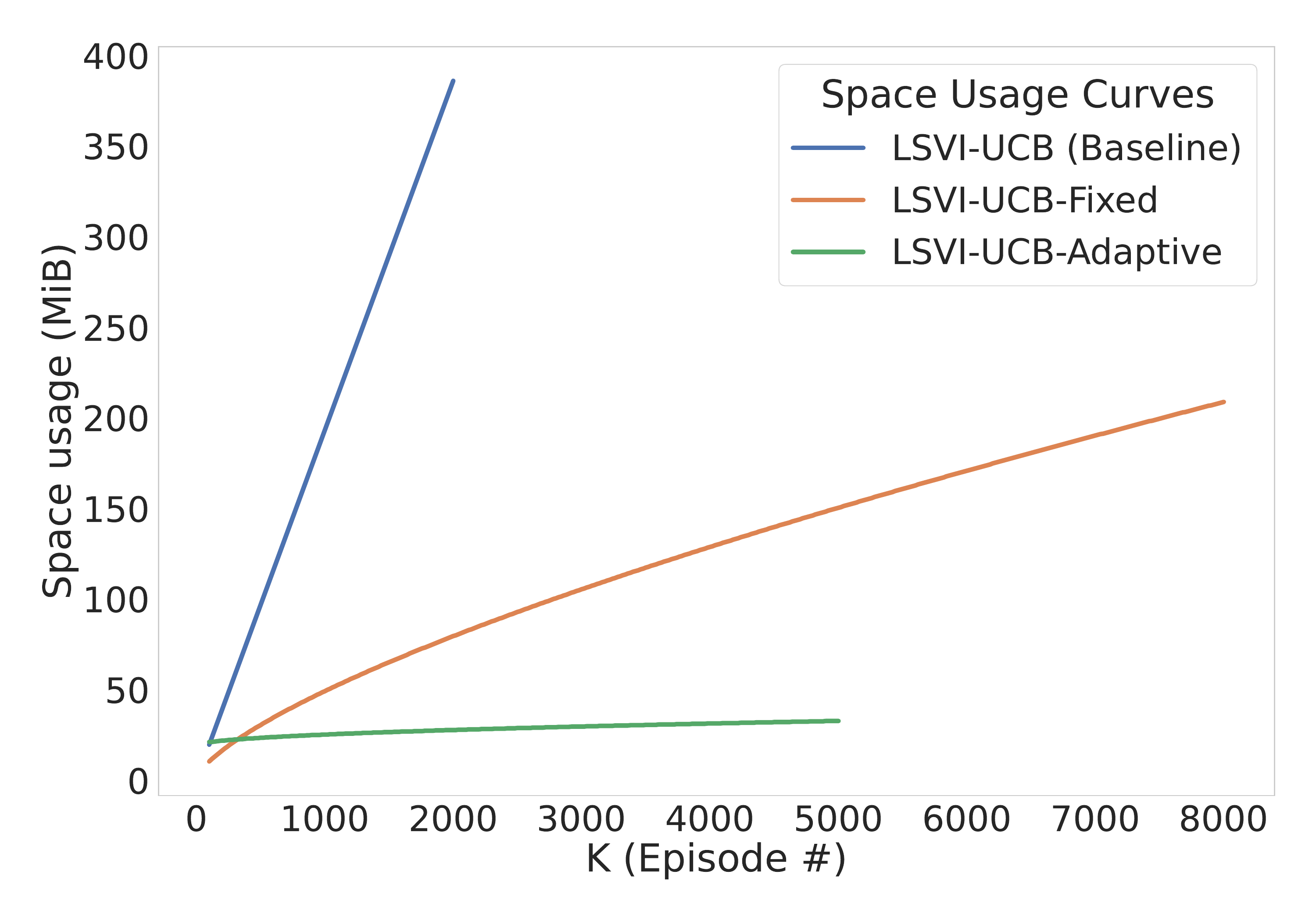}
\caption{Synthetic data: Space usage.}
\label{fig:space_synthetic}
\end{figure}

To further show the advantages of our algorithms in terms of efficiency, we compare the process execution (CPU) time of \textsc{LSVI-UCB-Adaptive} and of \textsc{LSVI-UCB-Fixed} to the process execution (CPU) time of LSVI-UCB (see \Cref{fig:time_synthetic}).

\begin{figure}
\centering
\includegraphics[width=0.65\columnwidth]{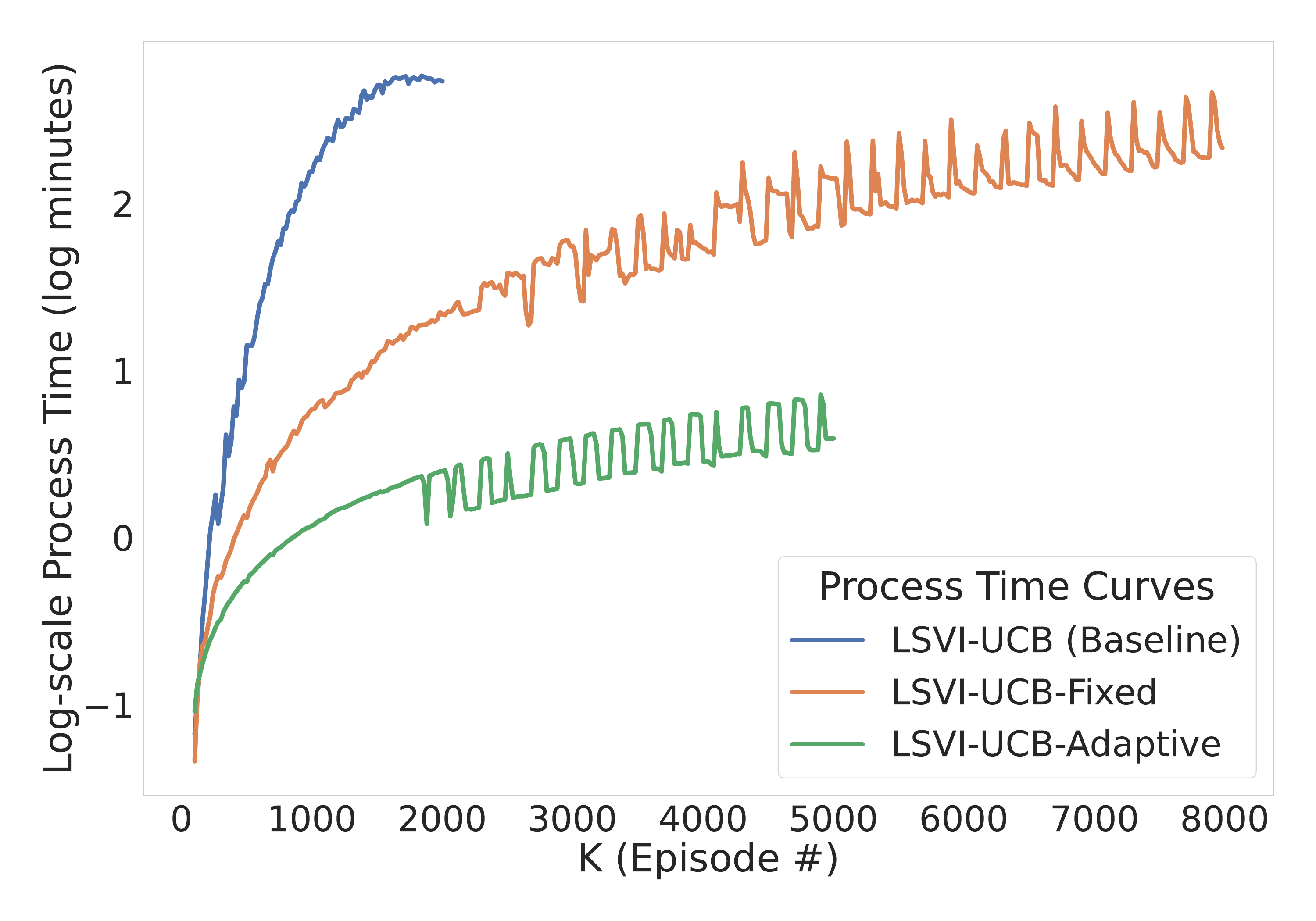}
\caption{Synthetic data: Running time.}
\label{fig:time_synthetic}
\end{figure}

We believe that the ``zig-zag'' phenomenon in the process-time curves of \Cref{fig:time_synthetic} (where we measure CPU time by Python's \texttt{time.process\_time}) is due to the fact that our experiments were ran on a heavily-loaded cluster with fewer GPUs, alongside other unrelated experiments, using multi-process parallelization again for various values of $K$, which could lead to starvation.

We observe that both of our proposed algorithms have running time that is asymptotically at most as that of LSVI-UCB.
This is because both of our algorithms end up using matrix operations (for the ridge regression step, etc.) with much smaller matrices when compared to LSVI-UCB.
This ends up more than offsetting the time costs of the additional bookkeeping involved in our algorithms (the $\textsc{Learn}$ conditions and the data structure to keep track of the proxies for non-learning intervals).

The space usage as it relates to the parameters is presented in \Cref{fig:space_fixed_param} and \Cref{fig:space_adapt_param}.

\begin{figure}
\centering
\includegraphics[width=0.65\columnwidth]{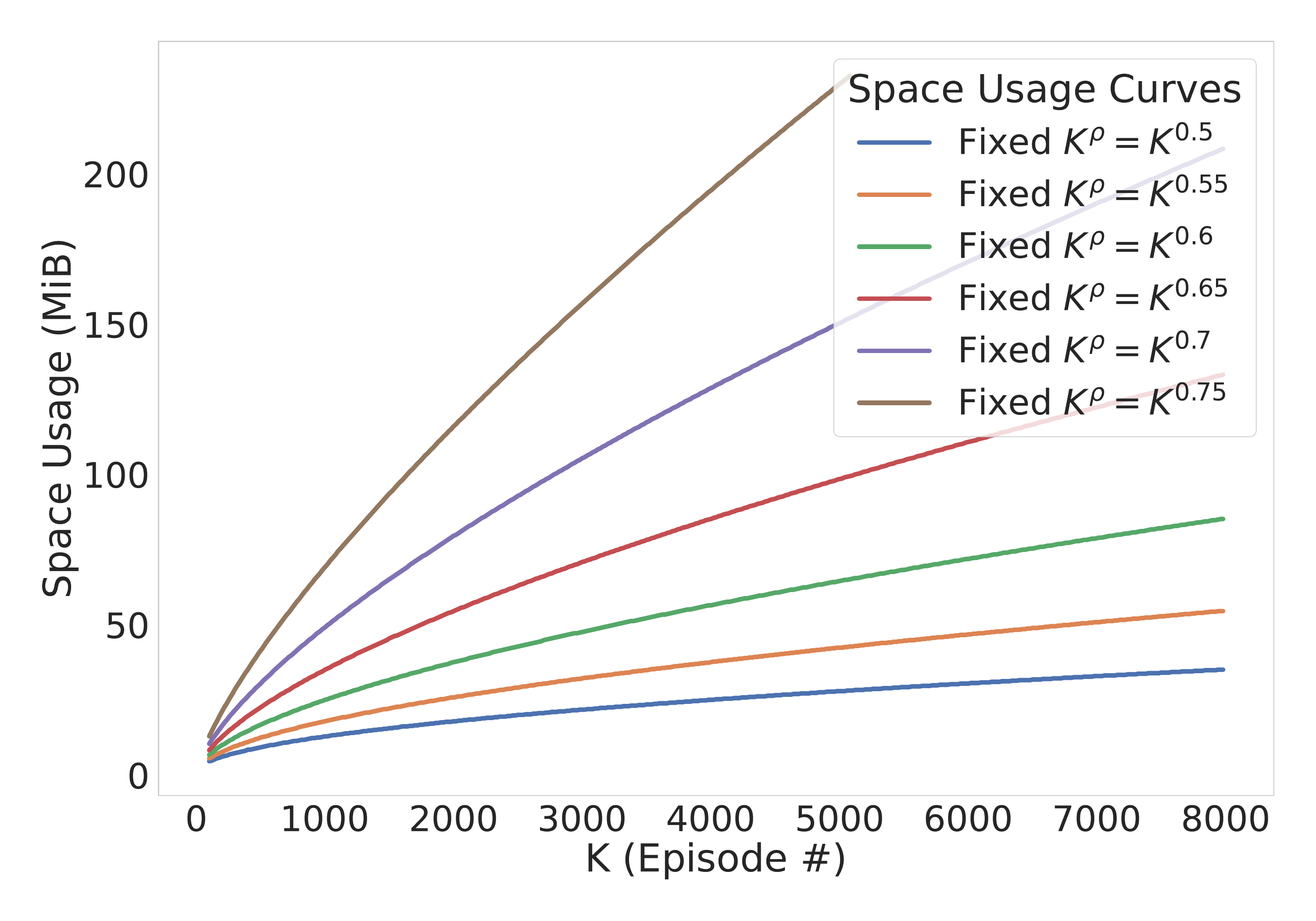}
\caption{Synthetic data: Space usage of \textsc{LSVI-UCB-Fixed} as a function of parameters.}
\label{fig:space_fixed_param}
\end{figure}

\begin{figure}
\centering
\includegraphics[width=0.65\columnwidth]{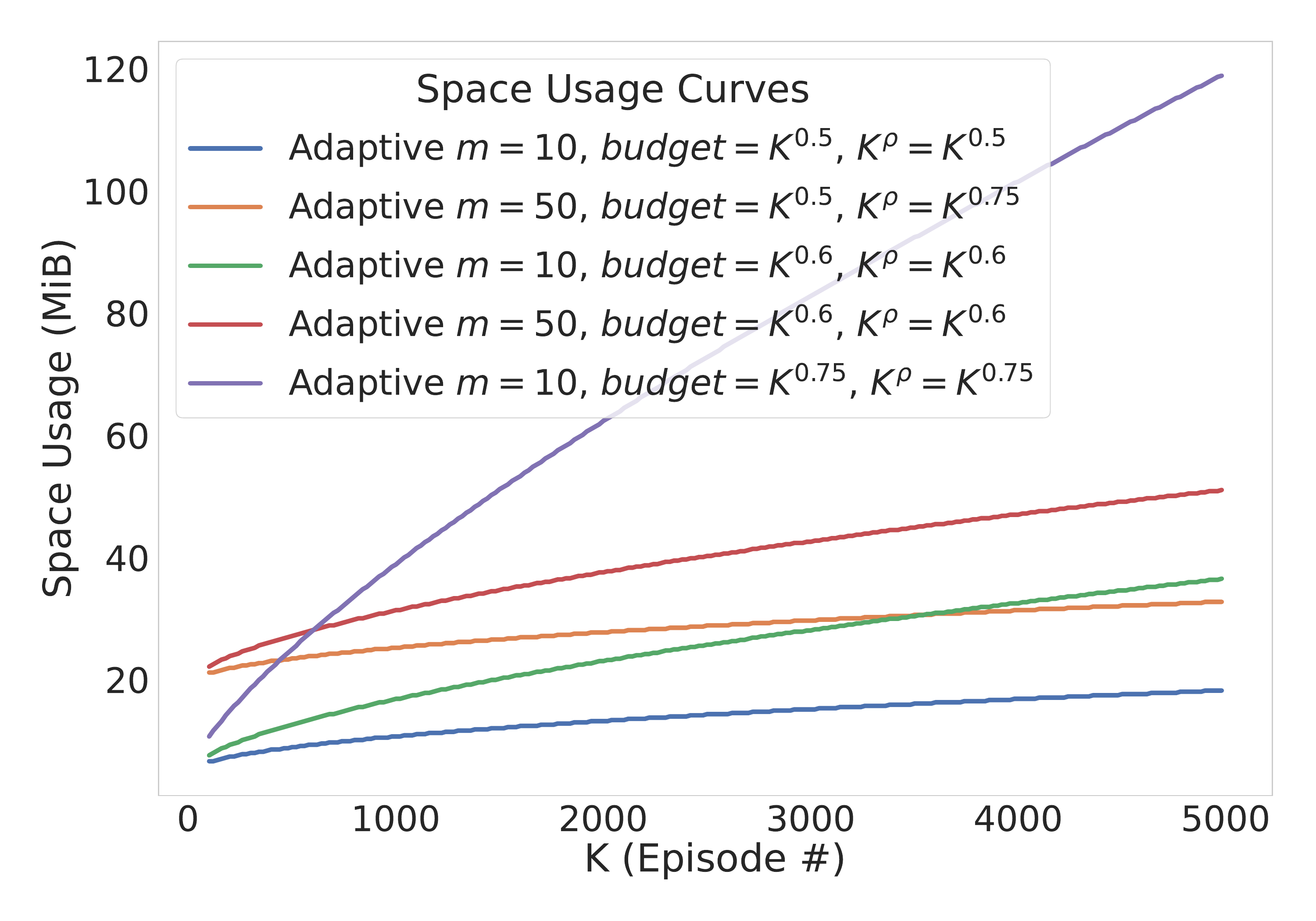}
\caption{Synthetic data: Space usage of \textsc{LSVI-UCB-Adaptive} as a function of parameters.}
\label{fig:space_adapt_param}
\end{figure}

We can see that, unsurprisingly, the space usage of \textsc{LSVI-UCB-Fixed} increases with the power of $K$, so the best space usage is achieved when the learning intervals are of length $\sqrt{K}$. We notice the same behavior for running time, presented in the supplementary material.

\subsection{Linearized Environment Experiments}

We experimented with the basic LSVI-UCB and the \textsc{LSVI-UCB-Fixed} algorithms on the \textsf{Linearized-Alien} and \textsf{Linearized-Phoenix} datasets (see \Cref{sec:expts:envs}).
The experiments are done for $K = 500$ episodes, with the following parameters:
For \textsc{LSVI-UCB-Fixed}, the learning interval size is set to $K^{0.75}$ (i.e., $\rho = 0.75$), and for \textsc{LSVI-UCB-Adaptive}, $m$ is set to $10$, $\tau_c = 0.1$, $\textsf{Budget} = K^{0.5}$, and $\rho = 0.75$.

We have stopped LSVI-UCB algorithm at $K = 280$ as both the time and space requirements were too high compared to our algorithms, which we ran for $K = 500$ episodes.

\Cref{tab:linearized}, \Cref{tab:linearized-2}, and \Cref{tab:linearized-3} show the results for the linearized environments in terms of reward, space usage, and process time at the start of the execution ($K = 100$), when LSVI-UCB has stopped ($K = 280$), and at the end of the execution ($K = 500$).

\begin{table*}
\caption{Results on linearized environments: Total reward.}
\label{tab:linearized}
\centering
\begin{tabular}{ccrrrrrrrrr}
\toprule
\textbf{Dataset} & \textbf{Algorithm} & \multicolumn{3}{c}{\textbf{Total Reward}} \\
&  & $100$ & $280$ & $500$ \\
\midrule
\multirow{3}*{\textsf{Alien}} & \textsc{LSVI-UCB} & {\color{black}$9927.16$} & {\color{black}$27798.32$} \\
& \textsc{LSVI-UCB-Fixed} &
{\color{black}$9927.16$} & {\color{black}$27798.32$} & {\color{black}$49635.66$} \\
& \textsc{LSVI-UCB-Adaptive} & {\color{black}$9927.16$} & {\color{black}$27795.68$} & {\color{black}$49635.66$} \\
\midrule
\multirow{3}*{\textsf{Phoenix}} & \textsc{LSVI-UCB} & {\color{black}$10061.76$} & {\color{black}$28171.04$} & {\color{black}$50306.24$} \\
& \textsc{LSVI-UCB-Fixed} & {\color{black}$10061.76$} & {\color{black}$28171.04$} & {\color{black}$50306.23$} \\
& \textsc{LSVI-UCB-Adaptive} & {\color{black}$10061.76$} & {\color{black}$28171.85$} & {\color{black}$50306.24$} \\
\bottomrule
\end{tabular}
\end{table*}

\begin{table*}
\caption{Results on linearized environments: Space usage.}
\label{tab:linearized-2}
\centering
\begin{tabular}{ccrrrrrrrrr}
\toprule
\textbf{Dataset} & \textbf{Algorithm} & \multicolumn{3}{c}{\textbf{Space Usage (GiB)}} \\
& & $100$ & $280$ & $500$ \\
\midrule
\multirow{3}*{\textsf{Alien}} & \textsc{LSVI-UCB} & {\color{black}$0.988$} & {\color{black}$2.390$} & {\color{black}$-$}\\
& \textsc{LSVI-UCB-Fixed} & {\color{black}$0.716$} & {\color{black}$1.284$} & {\color{black}$1.868$}\\
& \textsc{LSVI-UCB-Adaptive} & {\color{black}$2.610$} & {\color{black}$2.657$} & {\color{black}$2.703$}\\
\midrule
\multirow{3}*{\textsf{Phoenix}} & \textsc{LSVI-UCB} & {\color{black}$0.579$} & {\color{black}$1.243$} & {\color{black}$2.054$}\\
& \textsc{LSVI-UCB-Fixed} & {\color{black}$0.449$} & {\color{black}$0.719$} & {\color{black}$0.995$}\\
& \textsc{LSVI-UCB-Adaptive} & {\color{black}$2.561$} & {\color{black}$2.583$} & {\color{black}$2.605$}\\
\bottomrule
\end{tabular}
\end{table*}

\begin{table*}
\caption{Results on linearized environments: Process time.}
\label{tab:linearized-3}
\centering
\begin{tabular}{ccrrrrrrrrr}
\toprule
\textbf{Dataset} & \textbf{Algorithm} & \multicolumn{3}{c}{\textbf{Process Time (hrs)}}\\
&  & $100$ & $280$ & $500$ \\
\midrule
\multirow{3}*{\textsf{Alien}} & \textsc{LSVI-UCB} & {\color{black}$0.19$} & {\color{black}$1.66$} & {\color{black}$-$}\\
& \textsc{LSVI-UCB-Fixed} & {\color{black}$0.23$} & {\color{black}$1.35$} & {\color{black}$3.19$}\\
& \textsc{LSVI-UCB-Adaptive} & {\color{black}$0.14$} & {\color{black}$0.34$} & {\color{black}$0.72$}\\
\midrule
\multirow{3}*{\textsf{Phoenix}} & \textsc{LSVI-UCB} & {\color{black}$0.184$} & {\color{black}$0.703$} & {\color{black}$2.879$}\\
& \textsc{LSVI-UCB-Fixed} & {\color{black}$0.115$} & {\color{black}$0.580$} & {\color{black}$1.476$}\\
& \textsc{LSVI-UCB-Adaptive} & {\color{black}$0.108$} & {\color{black}$0.207$} & {\color{black}$0.423$}\\
\bottomrule
\end{tabular}
\end{table*}

Note that the adaptive algorithm has a high space overhead (compared to the fixed algorithm) of around $2$ GiB in these experiments, since $d = 512$ for the linearized representations, horizon $H = 50$, and lookback period $m = 10$.
The overhead (which is $\Theta(d^2 M H)$) is for storing the past history.
For large $K$, this overhead would be dominated by the dependency of space usage on $K$.

The results are in line with the results on the synthetic experiments and show that our algorithms are more efficient in terms of space usage and process time compared to LSVI-UCB even in real-world linearized environments.
The space usage for \textsc{LSVI-UCB-Adaptive} is large, as described in the earlier paragraph, for small values of $K$, but the process time is small compared to LSVI-UCB. Moreover, it can be seen that the space usage remains constant, while that of LSVI-UCB grows with $K$.

\section{Conclusion}

\label{sec:conclusion}

In this paper, we  studied \emph{provably} space efficient variants of the LSVI-UCB algorithm, and show that they perform well in terms of accumulated regret.
Our experimental results support our findings, and show that significant gains can be achieved without significantly sacrificing regret for both of the variants.

Going forward, we plan to study the possible space savings obtainable with other reinforcement learning algorithms, and also test our proposed methods on other reinforcement learning benchmarks by using our linearization approach.

\section*{Acknowledgements}

DesCartes:
This research is supported by the National Research Foundation, Prime Minister’s Office, Singapore under its Campus for Research Excellence and Technological Enterprise (CREATE) programme.

We would like to thank the Research Cluster of the School of Computing of the National University of Singapore, for supporting our experiments.

The work of AB was supported in part by National Research Foundation Singapore under its NRF Fellowship Programme (NRF-NRFFAI-2019-0002) and an Amazon Faculty Research Award.

\newcommand{\etalchar}[1]{$^{#1}$}

\appendix

\section{Intuition for \texorpdfstring{\Cref{alg:LSVI-UCB_adaptive_reset}}{Algorithm 3}}

\label{sec:motivation}

In this section we prove \Cref{prop:motivation}.

\subsection{Preliminaries (cont.)}

We recount here some fundamental notions that are needed in this supplementary material.
Let $A$ be a matrix.
We denote by $\Tr\cprn{A}$ the trace of $A$.
We denote by $\bars{A}_p$ the \emph{$L_p$ norm of $A$} for $p\in\mathbb{N}\cup\brkts{\infty}$, by $\bars{A}_{2\to2}$ the \emph{operator norm of $A$}, and by $\bars{A}_{\rm F}$ the \emph{Frobenius norm of $A$}.
If $A$ is positive definite, then we denote this fact by $A \succ 0$.
If $x$ is a vector in $\R^d$ and $A \in \R^{d \times d}$ is a positive definite matrix, we denote the \emph{$A$-ellipsoid norm of $x$} by $\|x\|_A := (x^\top A x)^{\frac12}$.
We denote the \emph{minimum and maximum eigenvalues of $A$} by $\lambda_{\min}\cprn{A}$ and $\lambda_{\max}\cprn{A}$, respectively.

\subsection{Some Useful Results}

\label{sec:useful}

The following results will come handy in \Cref{sec:motivation}.

\begin{lemma}
\label{lem:ellipsoid-inequality}
If $x \in \R^d$ and $A, A^\prime \succ 0 \in \R^{d \times d}$, then
\[
\abs{\bars{x}_A^2 - \bars{x}_{A^\prime}^2} \leq \bars{x}_2^2 \bars{A - A^\prime}_{2 \to 2}.
\]
\end{lemma}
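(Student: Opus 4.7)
The plan is to reduce the inequality to an elementary operator-norm bound on the quadratic form associated with $A - A'$. The only ingredients I need are the definition of the ellipsoid norm $\|x\|_A^2 = x^\top A x$, the definition of the operator norm $\|M\|_{2\to 2} = \sup_{y\neq 0}\|My\|_2/\|y\|_2$, and the Cauchy--Schwarz inequality. Positive-definiteness of $A$ and $A'$ is required only for the quantities $\|x\|_A$ and $\|x\|_{A'}$ to make sense as norms; the computation itself does not use it.

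First I would rewrite the left-hand side as a single quadratic form:
\[
\bars{x}_A^2 - \bars{x}_{A'}^2 = x^\top A x - x^\top A' x = x^\top (A - A') x.
\]
Taking absolute values and applying Cauchy--Schwarz to the inner product $\angles{x,(A-A')x}$ gives
\[
\abs{x^\top (A-A') x} \leq \bars{x}_2 \cdot \bars{(A-A')x}_2.
\]
Then by the defining property of the operator norm,
\[
\bars{(A-A')x}_2 \leq \bars{A-A'}_{2\to 2} \cdot \bars{x}_2,
\]
and combining the two displayed inequalities yields the claimed bound.

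There is no real obstacle here; the statement is a two-line consequence of Cauchy--Schwarz and the definition of the operator norm. The only thing worth noting in the write-up is that one need not symmetrize $A - A'$ (although it is automatically symmetric here, as the difference of two symmetric matrices): the bound $|x^\top M x| \leq \|x\|_2^2 \|M\|_{2\to 2}$ holds for arbitrary $M$.
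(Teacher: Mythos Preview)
Your proposal is correct and follows exactly the same route as the paper's proof: rewrite the difference as the single quadratic form $x^\top(A-A')x$, apply Cauchy--Schwarz to bound it by $\|x\|_2\|(A-A')x\|_2$, and then invoke the operator-norm definition. Your additional remark that positive-definiteness is not actually used in the bound (only in making $\|x\|_A$ a genuine norm) is accurate and slightly sharpens the presentation.
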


\begin{proof}
We have that
\begin{align*}
\abs{\bars{x}_A^2 - \bars{x}_{A^\prime}^2}
& = (x^\top A x) - (x^\top A^\prime x)\\
& = x^\top (Ax) - x^\top (A^\prime x)\\
& = x^\top ((A - A^\prime) x)\\
& \leq \bars{x}_2 \bars{(A - A^\prime) x}_2\\
& \leq \bars{x}_2^2 \bars{A - A^\prime}_{2 \to 2}.
\qedhere
\end{align*}
\end{proof}

We require the following property of sub-sampling from a large set of independent samples.

\begin{lemma}
\label{lem:subsampling_discrete}
Let $L:=\prn{v_1,\dots,v_n}$ be a sequence of vectors sampled independently from $\cD$, a distribution on $\R^d$.
Let $p$ be a distribution on $[n]$, and denote $\|p\|_2^2 := \sum_{i \in [n]} p(i)^2$.
Then, with probability at least $1 - \delta$, the distribution that arises from sampling $m \leq \sqrt{{\delta}/{\|p\|_2^2}}$ vectors from $L$ independently according to the distribution $p$ is the same as the distribution that arises from sampling $m$ vectors independently from the distribution $\cD$.
\end{lemma}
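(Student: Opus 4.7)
The plan is to interpret the statement as a coupling argument based on the birthday/collision bound. Let $i_1,\ldots,i_m$ be the indices drawn i.i.d.\ from $p$, so that the claimed sample is $(v_{i_1},\ldots,v_{i_m})$. The key observation is that because $L=(v_1,\ldots,v_n)$ is drawn i.i.d.\ from $\cD$ and the indices $i_1,\ldots,i_m$ are drawn independently of $L$, conditioned on the event $E$ that $i_1,\ldots,i_m$ are pairwise distinct, the tuple $(v_{i_1},\ldots,v_{i_m})$ is distributed exactly as $m$ independent samples from $\cD$. So it suffices to show that $E$ holds with probability at least $1-\delta$ over the joint draw.

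The second step is the collision bound. For any fixed pair $j\neq k$, independence of $i_j$ and $i_k$ under $p$ gives
\[
\cpr{i_j = i_k} = \sum_{\ell \in [n]} p(\ell)^2 = \bars{p}_2^2.
\]
A union bound over the $\binom{m}{2}$ unordered pairs yields
\[
\cpr{\neg E} \leq \binom{m}{2}\bars{p}_2^2 \leq \frac{m^2}{2}\bars{p}_2^2.
\]
Substituting the hypothesis $m \leq \sqrt{\delta/\bars{p}_2^2}$ gives $\cpr{\neg E} \leq \delta/2 \leq \delta$, so $\cpr{E} \geq 1-\delta$ as desired, and on $E$ the sampled tuple coincides in distribution with $m$ independent $\cD$-samples by the first step.

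There is no substantive obstacle in the calculation itself; the only delicate point is articulating the coupling cleanly, since the lemma's phrasing ``with probability at least $1-\delta$, the distribution \ldots is the same as \ldots'' conflates the event (distinct indices) with the distributional equality it implies. I would explicitly state that all randomness (over $L$ and over $i_1,\ldots,i_m$) is on a common probability space, define $E$ as the collision-free event, and argue that on $E$ the output is a genuine i.i.d.\ sample from $\cD$ because conditioning on $E$ only constrains the indices and leaves the $v_\ell$'s i.i.d.\ from $\cD$. Once this is in place, the rest is a one-line union bound.
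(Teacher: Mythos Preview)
Your proposal is correct and follows essentially the same approach as the paper: both argue that a single index draw has marginal $\cD$, bound the collision probability by $\binom{m}{2}\|p\|_2^2 \leq m^2\|p\|_2^2$, and conclude that on the collision-free event the samples are i.i.d.\ from $\cD$. Your presentation is slightly more careful in framing the coupling and actually gets the tighter constant $m^2/2$, but the substance is identical.
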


\begin{proof}
If we sample $z \in L$ with $\cpr{z = v_i} = p(i)$, then for any Borel set $B$ of $\R^d$, we have
\begin{align*}
\cpr{z \in B}
&= \sum_{i=1}^{n}\cpr{z \in B|z = v_i} \cpr{z = v_i}
= \sum_{i=1}^{n} \cpr{v_i \in B} \cdot p(i)
= \cpr{v_1 \in B},
\end{align*}
since the $v_i$'s are identically distributed.
This shows that $z$ has the same distribution $\cD$.
If we sample $z_1, z_2$ from $L$ independently according to $p$ and get $z_1 = v_i$, $z_2 = v_j$ for $i, j \sim p$, the probability that $i = j$ is given by $\|p\|_2^2$.
Hence, if we take $m$ independent samples, the probability that there is at least one collision is upper bounded by $\binom{m}{2} \|p\|_2^2 \leq m^2 \|p\|_2^2$.
So, for $m \leq \sqrt{{\delta}/{\|p\|_2^2}}$, this probability is upper-bounded by $\delta$.
If no collisions happen in the $m$ samples from $p$, then the samples are independent by construction of $L$.
\end{proof}

We now prove a version of \Cref{lem:subsampling_discrete} for continuous distributions.

\begin{lemma}
\label{lem:subsampling_continuous}
Let $L:=\prn{v_i}_{i \in \cI}$ be a sequence of vectors sampled independently from $\cD$, a distribution on $\R^d$, where the index set $\cI$ is \emph{non-discrete}.
Let $p$ be a \emph{continuous} distribution on $\cI$.
Then, with probability $1$, the distribution that arises from sampling $m$ vectors from $L$ independently according to the distribution $p$ is the same as the distribution that arises from sampling $m$ vectors independently from $\cD$.
\end{lemma}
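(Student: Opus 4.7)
The plan is to follow the same two-step strategy used for \Cref{lem:subsampling_discrete}, but replace the collision-probability bound (which relied on $\|p\|_2^2$ being positive in the discrete case) with the fact that a continuous distribution has zero mass on any fixed point, so pairwise index collisions happen with probability $0$ rather than merely with small probability.

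First, I would show that a single sample $z$ drawn from $L$ via $p$ has marginal distribution $\cD$. For any Borel set $B \subseteq \R^d$, conditioning on the chosen index $i \sim p$ and using that the $v_i$'s are i.i.d.\ from $\cD$,
\[
\Prob\!\left[z \in B\right]
= \int_{\cI} \Prob\!\left[v_i \in B\right] p(i)\, di
= \Prob\!\left[v_1 \in B\right] \int_{\cI} p(i)\, di
= \Prob\!\left[v_1 \in B\right],
\]
which is exactly the $\cD$-mass of $B$. This handles the marginal.

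Next, for $m$ samples, I would draw indices $i_1,\dots,i_m$ i.i.d.\ from $p$ and let $z_j := v_{i_j}$. Since $p$ is continuous, for any fixed $j \neq k$ we have $\Prob[i_j = i_k] = \int_{\cI} p(i)\cdot 0 \, di = 0$, because the conditional probability that $i_k$ equals the (almost surely unique) value $i_j$ under a continuous distribution is zero. A union bound over the $\binom{m}{2}$ pairs gives $\Prob[\exists\, j \neq k : i_j = i_k] = 0$. Hence, with probability $1$, the indices are all distinct.

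Finally, I would condition on the almost-sure event that $i_1,\dots,i_m$ are distinct. On this event, $z_1 = v_{i_1},\dots, z_m = v_{i_m}$ are drawn from the underlying independent family $(v_i)_{i\in\cI}$ at $m$ distinct indices, so by the assumed independence of $L$ across $\cI$ they are themselves independent; combined with the marginal computation above, each $z_j$ is distributed as $\cD$. Therefore $(z_1,\dots,z_m)$ has the same joint distribution as $m$ independent draws from $\cD$, as required. The only subtlety worth being careful about is the measurability setup for the ``non-discrete'' index set $\cI$ (so that the integrals above and the notion of ``continuous distribution'' are well defined), but once we fix a standard Borel structure on $\cI$ and interpret independence of $(v_i)_{i\in\cI}$ in the usual product-measure sense, the argument is routine and parallels \Cref{lem:subsampling_discrete}.
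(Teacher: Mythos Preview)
Your proposal is correct and follows essentially the same approach as the paper: first verify the marginal of a single sample via conditioning on the index, then argue that continuity of $p$ forces pairwise index collisions to have probability zero, so the $m$ chosen indices are almost surely distinct and hence the corresponding $v_{i_j}$ are independent $\cD$-samples. Your additional remark about the measurability setup on $\cI$ is more careful than the paper, which leaves this implicit.
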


\begin{proof}
If we sample $z \in L$ by sampling $i \sim p$ and taking $z = v_i$, then for any Borel set $B$ of $\R^d$, we have
\[
\cpr{z \in B}
= \int_{x \in \cI} \cpr{z \in B|z = v_x} \cdot p(x)\ \mathrm{d}x
= \cpr{v_1 \in B},
\]
where the last step is valid since since $\cpr{z \in B|z = v_x}$ does not depend on $x$.
This shows that $z$ follows the distribution $\cD$.
If we sample $z_1, \ldots, z_m$ from $L$ independently according to $p$, by the fact that $p$ is continuous, we have that the probability of collisions between $z_1,\ldots,z_m$ is $0$.
In that case, the samples are independent by the construction of $L$.
\end{proof}

We require the following result by~\cite{vershynin2010close}.

\begin{theorem}[See~\cite{vershynin2010close,adamczak2012sharp}]
\label{thm:vershynin}
Consider a random vector $X$ in $\R^d$.
Let $\Sigma$ be the covariance matrix of $X$ and $\Sigma_N := \frac{1}{N} \sum_{i = 1}^N w_i w_i^\top$ be the empirical covariance matrix corresponding to $N$ i.i.d. samples $w_1,\dots,w_N\sim\cN\cprn{0,\Sigma}$.
Then, with probability at least $1 - 2 \exp\cprn{-O\prn{\sqrt{d}}}$, it is the case that
\[
1 - O\cprn{\sqrt{\beta}}
\leq \lambda_{\min}\cprn{\Sigma_N}
\leq 1 + O\cprn{\sqrt{\beta}}
\]
for $\beta:=d/N$.
\end{theorem}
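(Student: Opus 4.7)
The plan is to derive the stated two-sided bound on $\lambda_{\min}\cprn{\Sigma_N}$ as a consequence of a spectral-norm estimate of the form $\bars{\Sigma_N - I}_{2 \to 2} = O\cprn{\sqrt{\beta}}$, from which Weyl's inequality transfers the deviation directly to every eigenvalue of $\Sigma_N$, in particular to $\lambda_{\min}$. Since the two-sided bound in the statement is centered at $1$, the natural reading is that either $\Sigma = I$ or that the displayed quantity is really $\lambda_{\min}\cprn{\Sigma^{-1/2}\Sigma_N\Sigma^{-1/2}}$; accordingly I would first perform a whitening reduction $w_i \mapsto \Sigma^{-1/2} w_i$, reducing everything to the isotropic case $w_1,\dots,w_N \sim \cN\cprn{0,I}$ and writing this out as a preliminary remark so the normalization by $1$ becomes unambiguous.

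The technical core is concentration of a quadratic form plus a net argument. For any fixed unit vector $v \in \R^d$, the projections $v^\top w_i$ are i.i.d.\ standard Gaussians, so $N \cdot v^\top \Sigma_N v = \sum_{i=1}^N (v^\top w_i)^2$ follows a $\chi^2_N$ law. A Laurent--Massart tail bound then gives $\cpr{\sabs{v^\top \Sigma_N v - 1} \geq t} \leq 2 \exp(-cNt^2)$ for all sufficiently small $t > 0$. Let $\cN_{1/4}$ be a $(1/4)$-net of the unit sphere $\mathbb{S}^{d-1}$ of cardinality at most $9^d$ (a standard volumetric packing bound). A union bound over $\cN_{1/4}$ then shows that with probability at least $1 - 9^d \cdot 2\exp(-cNt^2)$, the inequality $\sabs{v^\top \Sigma_N v - 1} \leq t$ holds simultaneously for every $v \in \cN_{1/4}$.

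To promote this net estimate to an operator-norm bound, I would invoke the classical symmetric-matrix net lemma: for any symmetric $M \in \R^{d \times d}$, $\bars{M}_{2 \to 2} \leq 2 \max_{v \in \cN_{1/4}} \sabs{v^\top M v}$. Applied to $M := \Sigma_N - I$ on the good event, this gives $\bars{\Sigma_N - I}_{2 \to 2} \leq 2t$. Calibrating $t := C\sqrt{d/N} = C\sqrt{\beta}$ for a sufficiently large absolute constant $C$ makes $9^d \cdot 2\exp(-cNt^2) = 2\exp\cprn{d\ln 9 - cC^2 d} \leq 2\exp\cprn{-\Omega(d)}$, which is (loosely) bounded by $2\exp\cprn{-\Omega(\sqrt{d})}$ as advertised. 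Weyl's inequality then yields $\sabs{\lambda_{\min}\cprn{\Sigma_N} - 1} \leq \bars{\Sigma_N - I}_{2 \to 2} = O\cprn{\sqrt{\beta}}$, which is exactly the claimed two-sided bound.

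The main obstacle is matching the exact probability advertised in the statement. The union-bound/net argument above naturally gives the sharper rate $2\exp(-\Omega(d))$; the $\sqrt{d}$ rate as stated is looser, and I read it as phrased that way because the authors also cite~\cite{adamczak2012sharp}, whose result for more general (log-concave) ensembles only achieves that weaker exponent. Either way, the Gaussian case is absorbed into the stated bound with room to spare. A secondary technical point is being careful with the whitening reduction when $\Sigma$ is not strictly positive definite (in which case one works on the range of $\Sigma$); the remaining ingredients -- Gaussian quadratic-form concentration, $\varepsilon$-nets on the sphere, and Weyl's inequality -- are standard, so the novelty of the argument lies only in correctly parameterizing the net, the deviation $t$, and the resulting failure probability against each other.
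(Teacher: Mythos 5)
You should first note that the paper does not actually prove \Cref{thm:vershynin}: it is imported verbatim (with a ``See'' citation) from \cite{vershynin2010close,adamczak2012sharp} and used as a black box in the proof of \Cref{lem:gaussian_vec-matrix_min_eigenvalue_convergence}, so there is no in-paper argument to compare yours against. On its own merits, your proof is the standard and correct route for the Gaussian case: whitening to reduce to $\cN(0,I)$, the $\chi^2_N$ concentration of $v^\top \Sigma_N v$ for fixed unit $v$ via Laurent--Massart, a $(1/4)$-net of cardinality at most $9^d$ with the symmetric-matrix lemma $\bars{M}_{2\to2}\le 2\max_{v\in\cN_{1/4}}\sabs{v^\top M v}$, and Weyl's inequality to pass from $\bars{\Sigma_N-I}_{2\to2}=O(\sqrt{\beta})$ to the two-sided bound on $\lambda_{\min}$. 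Two of your side remarks are worth keeping explicit if this were written out: (i) the statement as printed is only coherent after the whitening step (the bound is centered at $1$ even though $\Sigma$ is arbitrary), and the paper's downstream use in \Cref{lem:gaussian_vec-matrix_min_eigenvalue_convergence} indeed applies it only to the isotropic matrix $D_k=\frac1k\sum_i z_iz_i^\top$ with $z_i\sim\cN(0,I)$, so your reading is the intended one; (ii) your argument actually yields failure probability $2\exp(-\Omega(d))$, strictly stronger than the stated $2\exp(-O(\sqrt d))$, the weaker exponent being an artifact of the more general log-concave setting of \cite{adamczak2012sharp}. The only caveat to flag is that the sub-Gaussian tail $2\exp(-cNt^2)$ for the $\chi^2$ deviation holds only for $t$ bounded by a constant, so the calibration $t=C\sqrt{d/N}$ implicitly requires $\beta=d/N$ bounded; this is harmless since the conclusion is vacuous otherwise, but it should be stated.
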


Drawing upon \Cref{thm:vershynin} we show the following lemma.

\begin{lemma}
\label{lem:gaussian_vec-matrix_min_eigenvalue_convergence}
Let $w_1,\dots,w_k$ be drawn from $\cN\cprn{0,\Sigma}$.
Then the minimum eigenvalue of $\sum_{i=1}^k w_iw_i^\top$ is at least $k\cdot\lambda_{\min}\cprn{\Sigma}/100$, with probability at least $1 - 2 \exp\cprn{-O\prn{\sqrt{d}}}$.
\end{lemma}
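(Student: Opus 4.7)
The plan is to reduce to the isotropic case and then invoke \Cref{thm:vershynin}. First, we may assume $\Sigma \succ 0$: if $\lambda_{\min}\cprn{\Sigma}=0$, then the claimed bound reads $\lambda_{\min}\cprn{\sum_i w_iw_i^\top}\geq 0$, which is automatic since $\sum_i w_iw_i^\top$ is PSD. Define the whitened vectors $y_i := \Sigma^{-1/2} w_i$ for $i\in[k]$. Since Gaussians are closed under linear transformations, $y_1,\dots,y_k$ are i.i.d. draws from $\cN\cprn{0,I_d}$, and by construction
\[
\sum_{i=1}^k w_i w_i^\top = \Sigma^{1/2}\!\left(\sum_{i=1}^k y_i y_i^\top\right)\!\Sigma^{1/2}.
\]

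Next I would apply \Cref{thm:vershynin} to the $y_i$'s (where the true covariance is $I_d$) with $N=k$ and $\beta = d/k$. This yields that, with probability at least $1-2\exp\cprn{-O\prn{\sqrt{d}}}$,
\[
\lambda_{\min}\!\left(\tfrac{1}{k}\sum_{i=1}^k y_i y_i^\top\right) \geq 1 - O\cprn{\sqrt{d/k}}.
\]
Assuming $k$ is sufficiently large relative to $d$ (so that the $O\cprn{\sqrt{d/k}}$ term is at most, say, $1/2$), we obtain $\lambda_{\min}\cprn{\sum_{i=1}^k y_i y_i^\top}\geq k/2$ on this event.

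The final step is to transfer the eigenvalue bound through the sandwiching by $\Sigma^{1/2}$. For any symmetric PSD $B$ and any symmetric PSD $A$, the variational characterization gives $\lambda_{\min}\cprn{ABA}\geq \lambda_{\min}\cprn{A}^2\cdot\lambda_{\min}\cprn{B}$: indeed, for any unit $x$, setting $z=Ax$ we have $x^\top A B A x = z^\top B z \geq \lambda_{\min}\cprn{B}\bars{z}_2^2 \geq \lambda_{\min}\cprn{B}\lambda_{\min}\cprn{A}^2$. Taking $A=\Sigma^{1/2}$ (so $\lambda_{\min}\cprn{A}^2=\lambda_{\min}\cprn{\Sigma}$) and $B=\sum_i y_iy_i^\top$ gives, on the same event,
\[
\lambda_{\min}\!\left(\sum_{i=1}^k w_i w_i^\top\right) \geq \lambda_{\min}\cprn{\Sigma}\cdot \lambda_{\min}\!\left(\sum_{i=1}^k y_i y_i^\top\right) \geq \frac{k\cdot \lambda_{\min}\cprn{\Sigma}}{2},
\]
which is stronger than the claimed factor $1/100$ (the $100$ is just cosmetic slack to absorb the hidden constant in the $O\cprn{\sqrt{d/k}}$ term of \Cref{thm:vershynin} in the regime $k\gtrsim d$).

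The only mild obstacle is quantitative: \Cref{thm:vershynin} is an asymptotic statement in $\beta=d/k$, so one must check that $k$ is large enough compared to $d$ that the hidden constant allows $1-O\cprn{\sqrt{d/k}}\geq 1/100$. This is implicit in the statement (the bound is vacuous otherwise, since an individual summand $w_iw_i^\top$ is rank-one and $\sum_i w_iw_i^\top$ is singular whenever $k<d$). Everything else is a direct consequence of whitening and the Weyl-type inequality for the conjugation $B\mapsto \Sigma^{1/2}B\Sigma^{1/2}$.
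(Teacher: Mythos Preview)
Your proposal is correct and follows essentially the same approach as the paper: both whiten to the isotropic case (the paper via the Cholesky factor $\Sigma=UU^\top$, you via the symmetric square root $\Sigma^{1/2}$), apply \Cref{thm:vershynin} to the standardized vectors, and then push the eigenvalue lower bound back through the conjugation using the Rayleigh-quotient inequality $\lambda_{\min}(ABA)\geq \lambda_{\min}(A)^2\lambda_{\min}(B)$. The choice of factorization and your explicit handling of the degenerate case $\lambda_{\min}(\Sigma)=0$ are cosmetic differences only.
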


\begin{proof}
Let $\overline{\Lambda}_k := \frac{1}{k} \sum_{i=1}^{k} w_i w_i^\top$.
It would suffice to show that $\lambda_{\rm min}(\overline{\Lambda}_k) \geq \lambda_{\rm min}(\Sigma)/100$ with high probability.
Consider the Cholesky decomposition of $\Sigma$, that is, $\Sigma = U U^\top$ for some matrix $U$.
We may now write $w_1,\ldots,w_k$ as $U z_1, \ldots, U z_k$, respectively, where $z_1,\ldots,z_k \sim \cN(0,I)$ are i.i.d. samples.

Then we have that
\begin{align*}
\overline{\Lambda}_k
&= \frac{1}{k}\sum_{i=1}^{k} w_i w_i^\top\\
&= \frac{1}{k}\sum_{i=1}^{k} Uz_i z_i^\top U^\top\\
&= U \cprn{\frac{1}{k} \sum_{i=1}^{k} z_i z_i^\top} U^\top\\
&= U D_k U^\top,
\end{align*}
whereby $D_k = \frac{1}{k} \sum_{i=1}^{k} z_i z_i^\top$.
By \Cref{thm:vershynin}, we get that
\[
1 - O\cprn{\sqrt{\beta}}
\leq \lambda_{\min}\cprn{D_k}
\leq 1 + O\cprn{\sqrt{\beta}},
\]
for $\beta:=d/k$ with probability at least $1 - 2 \exp\cprn{-O\prn{\sqrt{d}}}$.

We shall now continue as follows.
For a matrix $A$, the Rayleigh quotient definition of minimum eigenvalue states that $\lambda_{\min}\cprn{A}=\min_{x:\bars{x} = 1} x^\top A x$.
Note that
\[
\bars{U^\top x}^2
=\prn{U^\top x}^\top U^\top x
=x^\top U U^\top x
=x^\top \Sigma x
\geq \lambda_{\min}\cprn{\Sigma}.
\]
Therefore
\begin{align*}
\lambda_{\min}\cprn{\overline{\Lambda}_k}
&=\lambda_{\min}(U D_k U^\top)\\
&=\min_{x:\bars{x} = 1} x^\top U D_k U^\top x\\
&=\min_{x:\bars{x} = 1} \prn{U^\top x}^\top D_k U^\top x\\
&=\lambda_{\min}(\Sigma) \min_{x:\bars{x} = 1} \frac{\prn{U^\top x}^\top}{\sqrt{\lambda_{\min}(\Sigma)}} D_k \frac{U^\top x}{\sqrt{\lambda_{\min}(\Sigma)}}\\
&\geq \lambda_{\min}(\Sigma) \min_{z:\bars{z} = 1} z^\top D_k z\\
&=\lambda_{\min}(D_k) \lambda_{\min}(\Sigma).
\end{align*}
Since
\[
\lambda_{\min}(\Sigma)-O\cprn{\lambda_{\min}(\Sigma)} O\cprn{\sqrt{\beta}}\\
\leq \lambda_{\min}\cprn{D_k} \lambda_{\min}(\Sigma)
\]
by \Cref{thm:vershynin}, we get that
\[
\lambda_{\min}\cprn{\overline{\Lambda}_k}
\geq \lambda_{\rm min}(\Sigma) - O\cprn{\lambda_{\rm min}(\Sigma)} O\cprn{\sqrt{\beta}}.
\]
What is left is to show that $\lambda_{\min}(\Sigma) - O\cprn{\lambda_{\min}(\Sigma)} O\cprn{\sqrt{\beta}}$ is at least $\lambda_{\rm min}(\Sigma)/100$ with high probability.
The latter follows by the value of $\beta = d/k$ and the preceding discussion.
\end{proof}

\subsection{Proof of \texorpdfstring{\Cref{prop:motivation}}{}}

In the following, we shall assume that $H, \Tr\cprn{\Sigma}, \bars{\Sigma^{-1}}_{2\to2}=O_d\cprn{1}$.
Note that these are natural assumptions.

\subsubsection{Convergence of \texorpdfstring{$\Lambda_{h, k}^{-1}$}{} Implies Convergence in the Action-Value Function}

Define
\[
\Lambda_{h,k}
:= \sum_{i=1}^{k-1} \phi(s_{h,i},a_{h,i}) \phi(s_{h,i},a_{h,i})^\top + \lambda I
\]
for any $h$ and $k$.
We will say that \emph{LSVI-UCB (\Cref{alg:LSVI-UCB}) converges at $k_c$} if, for all $k \geq k_c$,
\[
\bars{\Lambda_{h,k}^{-1} - \Lambda_{h,k + 1}^{-1}}_{2 \to 2}
\leq c / k^2,
\]
for some $c > 0$ (that depends on $d$).

Let $w_{h,k}$ be the weight vector learned by LSVI-UCB at step $h$ of episode $k$.
We bound the difference between $w_{w,k}$ and $w_{h,k+1}$.

\begin{lemma}
\label{lem:vec_convergence}
We have that
\begin{align*}
\bars{w_{h,k}-w_{h,k+1}}_2
\leq O_d(1/k),
\end{align*}
for all $k \geq \sqrt{K}$, with high probability.
\end{lemma}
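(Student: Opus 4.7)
My plan is to prove the lemma by backward induction on $h$ (from $H$ down to $1$), starting from the algebraic identity
\begin{align*}
w_{h,k+1} - w_{h,k}
&= \bigl(\Lambda_{h,k+1}^{-1} - \Lambda_{h,k}^{-1}\bigr)\bigl(u_{h,k} + z_{h,k}\bigr) \\
&\quad + \Lambda_{h,k+1}^{-1}\Bigl(\bigl(u_{h,k+1} - u_{h,k}\bigr) + \bigl(z_{h,k+1} - z_{h,k}\bigr)\Bigr),
\end{align*}
where $u$ and $z$ are as defined in \Cref{alg:LSVI-UCB}, and bounding the two summands separately.

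For the first summand, the convergence hypothesis gives $\bars{\Lambda_{h,k+1}^{-1} - \Lambda_{h,k}^{-1}}_{2\to 2} \leq c/k^2$, while $u_{h,k} + z_{h,k}$ is a sum of $k-1$ vectors each of norm $O(H)$ (since rewards and truncated $Q$-values lie in $[0,H]$ and $\bars{\phi}_2 \leq 1$), so $\bars{u_{h,k} + z_{h,k}}_2 \leq O(kH)$ and the contribution is $O(H/k) = O_d(1/k)$. For the second summand I will invoke \Cref{lem:gaussian_vec-matrix_min_eigenvalue_convergence} (under the background distributional assumptions on the $\phi(s_{h,i},a_{h,i})$'s) to obtain $\bars{\Lambda_{h,k+1}^{-1}}_{2\to 2} \leq O_d(1/k)$ with high probability whenever $k \geq \sqrt{K}$, reducing the task to bounding the Euclidean norm of the inner bracket by $O_d(1)$.

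The increment $u_{h,k+1} - u_{h,k} = \phi(s_{h,k},a_{h,k})\, r_h(s_{h,k},a_{h,k})$ has norm $O(1)$, and the nontrivial piece is $z_{h,k+1} - z_{h,k}$, which splits as the new summand $\phi(s_{h,k},a_{h,k})\max_{a} Q_{h+1,k+1}(s_{h+1,k},a)$ (of norm $O(H)$) plus the drift term $\sum_{i=1}^{k-1}\phi(s_{h,i},a_{h,i})\bigl(\max_{a} Q_{h+1,k+1}(s_{h+1,i},a)-\max_{a} Q_{h+1,k}(s_{h+1,i},a)\bigr)$. The inductive hypothesis $\bars{w_{h+1,k+1} - w_{h+1,k}}_2 \leq O_d(1/k)$, combined with \Cref{lem:ellipsoid-inequality} applied to the UCB bonus via $|a-b| \leq \sqrt{|a^2-b^2|}$, the convergence rate $c/k^2$, and the $1$-Lipschitz property of $\min(\cdot,H)$, yields the pointwise bound $|Q_{h+1,k+1}(s,a) - Q_{h+1,k}(s,a)| \leq O_d(1/k)$; hence the drift term has norm $\leq (k-1)\cdot O_d(1/k) = O_d(1)$, and the second summand contributes $O_d(1/k)\cdot O_d(H) = O_d(1/k)$. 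The base case $h = H$ is immediate because $Q_{H+1} \equiv 0$ forces $z_{H,k} = 0$.

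The main obstacle is the careful bookkeeping of constants through the induction: each level picks up additive contributions from the $\beta$-bonus and from the fresh summand in $z_h$, and a careless application could yield a prefactor that blows up like $H^{H-h}$ rather than a single $O_d(1)$. The clean way is to strengthen the inductive claim to $\bars{w_{h,k+1}-w_{h,k}}_2 \leq C_h/k$ with $C_h$ satisfying an explicit recursion $C_h \leq A + B\cdot C_{h+1}$ whose $A, B$ depend only on $d$, $\beta$, and the convergence constant $c$; since $H = O_d(1)$ under the standing assumptions, this recursion terminates with $C_h = O_d(1)$. A secondary concern is that \Cref{lem:gaussian_vec-matrix_min_eigenvalue_convergence} is stated for i.i.d.\ Gaussian samples, whereas $\phi(s_{h,i},a_{h,i})$ arises from an adaptive, policy-dependent trajectory; one must either appeal to a sub-Gaussian or bounded-support analogue, or first invoke \Cref{lem:subsampling_discrete}/\Cref{lem:subsampling_continuous} to reduce to an i.i.d.\ setting.
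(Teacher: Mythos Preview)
Your proposal is correct and in fact more careful than the paper's own argument. The paper writes $Q_{h+1}$ without an episode subscript and tacitly treats it as the \emph{same} function when forming both $w_{h,k}$ and $w_{h,k+1}$; under that fiction the decomposition collapses to the two-term identity
\[
w_{h,k+1}-w_{h,k}
=\bigl(\Lambda_{h,k+1}^{-1}-\Lambda_{h,k}^{-1}\bigr)\psi_{h,k}
+\bigl(r_h(s_{h,k},a_{h,k})+\max_a Q_{h+1}(s_{h+1,k},a)\bigr)\,\Lambda_{h,k+1}^{-1}\phi(s_{h,k},a_{h,k}),
\]
where $\psi_{h,k}$ is your $u_{h,k}+z_{h,k}$, and no backward induction or $\beta$-dependence appears because the scalar coefficient in the second term is simply bounded by $H+1$. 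What the paper gains in cleanliness it loses in rigor: the drift contribution $\sum_{i<k}\phi(s_{h,i},a_{h,i})\bigl(\max_a Q_{h+1,k+1}(s_{h+1,i},a)-\max_a Q_{h+1,k}(s_{h+1,i},a)\bigr)$ that you isolate is genuinely present in LSVI-UCB (since $Q_{h+1}$ is recomputed every episode) but is silently dropped---tolerated only because the appendix is offered as heuristic motivation under the blanket convention $H,\Tr(\Sigma),\|\Sigma^{-1}\|_{2\to2}=O_d(1)$. Your induction repairs that gap at the price of the $D^{H}$- and $\beta$-type constants you already flagged; the former is $O_d(1)$ under $H=O_d(1)$, while the latter makes your bound effectively $O_d(\beta/k)$ rather than $O_d(1/k)$ unless $\beta$ is also treated as $O_d(1)$. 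One minor correction: in this appendix the features are Gaussian (\Cref{assump:gaussian_features}), so the appropriate bound is $\|\phi(s,a)\|_2=O\bigl(\sqrt{\Tr(\Sigma)}\bigr)=O_d(1)$ with high probability, not $\|\phi\|_2\le 1$.
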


\begin{proof}
Let
\begin{align*}
\psi_{h,k}
&:= \sum_{i = 1}^{k-1}\phi\cprn{s_{h,i},a_{h,i}}r_h\cprn{s_{h,i},a_{h,i}}
+\sum_{i = 1}^{k-1}\phi\cprn{s_{h,i},a_{h,i}}\max_{a \in \cA} Q_{h+1}\cprn{s_{h+1,i},a}
\end{align*}
and
\[
z_{h,k}
:= r_h\cprn{s_{h,k},a_{h,k}} + \max_{a\in\cA} Q_{h+1}\cprn{s_{h+1,k},a}.
\]
Note that $z_{h,k} \in [0,H+1]$ by the bounded reward assumption.
We have that $w_{h,k} = \Lambda_{h,k}^{-1} \psi_{h,k}$, and
\begin{align*}
w_{h,k+1} - w_{h,k}
&= \Lambda_{h,k+1}^{-1} \sum_{i = 1}^{k}\phi\cprn{s_{h,i},a_{h,i}} \big(r_h\prn{s_{h,i},a_{h,i}}
+ \max_a Q_{h+1}\cprn{s_{h+1,i},a}\big) - w_{h,k}\\
&= \Lambda_{h,k+1}^{-1} \prn{\psi_{h,k}
+ z_{h,k}\phi\cprn{s_{h,k},a_{h,k}}} - w_{h,k}\\
&=\cprn{\Lambda_{h,k+1}^{-1} - \Lambda_{h,k}^{-1}} \psi_{h,k}
+z_{h,k}\Lambda_{h,k+1}^{-1} \phi(s_{h,k},a_{h,k}).
\end{align*}
Hence, using the triangle inequality and $0 \leq z_{h,k} \leq H+1$, we have
\begin{align*}
\bars{w_{h,k+1} - w_{h,k}}
&\leq \bars{\cprn{\Lambda_{h,k+1}^{-1} - \Lambda_{h,k}^{-1}} \psi_{h,k}}
+(H+1)\bars{\Lambda_{h,k+1}^{-1} \phi(s_{h,k},a_{h,k})}.
\end{align*}
To bound the first term, note that
\[
\bars{\psi_{h,k}}_2 \leq (H+1)\cdot(k-1)\cdot O(\sqrt{\Tr(\Sigma)}),
\]
and hence by our assumption we have that
\[
\bars{\cprn{\Lambda_{h,k+1}^{-1} - \Lambda_{h,k}^{-1}} \psi_{h,k}}_2
\leq O_d\cprn{\frac{(H+1)\sqrt{\Tr(\Sigma)}}{k}}.
\]
To bound the second term, note that $\Lambda_{h,k+1}^{-1}$ is equal to
\begin{align*}
\prn{\sum_{i = 1}^{k} \phi\cprn{s_{h,i},a_{h,i}}\phi\cprn{s_{h,i},a_{h,i}}^\top+\lambda I}^{-1}
&\approx_{\rm F} \prn{\sum_{i = 1}^{k}\phi\cprn{s_{h,i},a_{h,i}}\phi\cprn{s_{h,i},a_{h,i}}^\top}^{-1} \\
&\approx_{\rm F}\prn{k\Sigma}^{-1}
=\frac{1}{k}\Sigma^{-1},
\end{align*}
by setting $\lambda=o\cprn{1}$.

So we get that $\Lambda_{h,k+1}^{-1} \phi(s_{h,k},a_{h,k})
\approx_{\rm F} \frac{1}{k}\Sigma^{-1} \phi(s_{h,k},a_{h,k})$.
Let us now take into account the fact that (with high probability) $\bars{\phi(s_{h,k},a_{h,k})}_2\leq O\cprn{\sqrt{\Tr\cprn{\Sigma}}}$ and the assumption that $\bars{\Sigma^{-1}}_{2\to 2} = O_d\cprn{1}$.
In this case, we have
\[
\bars{\Lambda_{h, k + 1}^{-1} \phi(s_{h,k}, a_{h,k})}_2
\leq \frac{1}{k} \cdot O_d\cprn{\sqrt{\Tr\cprn{\Sigma}}},
\]
and so by the discussion above we get that

\begin{align*}
\bars{w_{h,k}-w_{h,k+1}}_2
&\leq (H+1)\cdot O_d\cprn{\frac{\sqrt{\Tr(\Sigma)}}{k}}
+(H+1)\cdot\frac{1}{k}\cdot O_d\cprn{\Tr\cprn{\Sigma}}.
\end{align*}
This concludes the proof as $H,\Tr\cprn{\Sigma}=O_d\cprn{1}$.
\end{proof}

The above yield a bound on $\bars{Q_{h,k}-Q_{h,k'}}_\infty$, as follows.

\begin{lemma}
\label{lem:weight_convergence}
If LSVI-UCB converges at $\sqrt{K}$, then for all $h \in [H]$ and $k'\geq k \geq \sqrt{K}$, we have that
\[
\bars{Q_{h,k}-Q_{h,k'}}_\infty
\leq O_d\cprn{\log k' - \log k + \frac{\beta}{K^{1/4}}},
\]
with high probability.
\end{lemma}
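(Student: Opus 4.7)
The plan is to bound $|Q_{h,k}(s,a) - Q_{h,k'}(s,a)|$ uniformly in $(s,a)$ by exploiting that $Q_{h,k}$ is obtained from $M_{h,k}(s,a) = w_{h,k}^\top \phi(s,a) + \beta \|\phi(s,a)\|_{\Lambda_{h,k}^{-1}}$ by truncation at $H$. Since $z \mapsto \min(z,H)$ is $1$-Lipschitz, it suffices to control $|M_{h,k}(s,a) - M_{h,k'}(s,a)|$, which by the triangle inequality I will split into a \emph{weight part} $|(w_{h,k} - w_{h,k'})^\top \phi(s,a)|$ and a \emph{bonus part} $\beta \bigl| \|\phi(s,a)\|_{\Lambda_{h,k}^{-1}} - \|\phi(s,a)\|_{\Lambda_{h,k'}^{-1}} \bigr|$.

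For the weight part, I would telescope and apply \Cref{lem:vec_convergence}: since $k \geq \sqrt{K}$, we have
\[
\|w_{h,k} - w_{h,k'}\|_2 \;\leq\; \sum_{j=k}^{k'-1} \|w_{h,j} - w_{h,j+1}\|_2 \;\leq\; \sum_{j=k}^{k'-1} O_d(1/j) \;\leq\; O_d(\log k' - \log k).
\]
Combining this with Cauchy--Schwarz and the assumption $\|\phi(s,a)\|_2 = O(\sqrt{\Tr(\Sigma)}) = O_d(1)$ (holding with high probability and used uniformly over $(s,a)$) gives the desired $O_d(\log k' - \log k)$ contribution.

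For the bonus part, the plan is to first bound the \emph{squared} ellipsoid norms via \Cref{lem:ellipsoid-inequality}:
\[
\bigl| \|\phi(s,a)\|_{\Lambda_{h,k}^{-1}}^2 - \|\phi(s,a)\|_{\Lambda_{h,k'}^{-1}}^2 \bigr| \;\leq\; \|\phi(s,a)\|_2^2 \cdot \bigl\| \Lambda_{h,k}^{-1} - \Lambda_{h,k'}^{-1} \bigr\|_{2 \to 2}.
\]
Telescoping with the convergence hypothesis $\|\Lambda_{h,j}^{-1} - \Lambda_{h,j+1}^{-1}\|_{2\to 2} \leq c/j^2$ and using $k \geq \sqrt{K}$ yields $\|\Lambda_{h,k}^{-1} - \Lambda_{h,k'}^{-1}\|_{2\to 2} \leq \sum_{j \geq k} c/j^2 = O_d(1/k) = O_d(1/\sqrt{K})$. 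The key observation is then to pass from squared norms to norms using the elementary inequality $|a - b| \leq \sqrt{|a^2 - b^2|}$ valid for $a,b \geq 0$ (which follows from $|a-b|^2 \leq (a+b)|a-b| = |a^2 - b^2|$). Combined with the bound above, this gives
\[
\bigl| \|\phi(s,a)\|_{\Lambda_{h,k}^{-1}} - \|\phi(s,a)\|_{\Lambda_{h,k'}^{-1}} \bigr| \;\leq\; O_d(1/K^{1/4}),
\]
so the bonus part contributes $O_d(\beta/K^{1/4})$. Summing the two contributions and using $1$-Lipschitzness of the truncation $\min(\cdot,H)$ yields the claimed $\ell_\infty$ bound.

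The main obstacle I expect is the passage from the operator-norm telescoping bound of $O_d(1/\sqrt{K})$ on the difference of inverses to the bound on the difference of ellipsoid \emph{norms}: a naive linearization would only give $O_d(1/\sqrt{K})$ inside the square root without the factor of $\beta$, so one must be careful that the $\sqrt{\cdot}$ on $|a^2 - b^2|$ is the right tool, and that it correctly degrades the rate from $1/\sqrt{K}$ to $1/K^{1/4}$. The weight-part telescoping, by contrast, is routine once \Cref{lem:vec_convergence} is in hand. All the ``$O_d(1)$'' absorptions rely on the standing assumptions $H, \Tr(\Sigma), \|\Sigma^{-1}\|_{2\to 2} = O_d(1)$ and the high-probability event on which $\|\phi\|_2$ is controlled.
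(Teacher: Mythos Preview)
Your proposal is correct and follows the same overall strategy as the paper: split $Q_{h,k}-Q_{h,k'}$ into a weight part and a bonus part, telescope, and invoke \Cref{lem:vec_convergence} and \Cref{lem:ellipsoid-inequality} respectively.

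The one genuine difference is in how you handle the bonus term. The paper telescopes at the level of $Q$ itself, bounding each consecutive difference $|Q_{h,i}-Q_{h,i+1}|$ by $O_d(1/i+\beta/i^2)$ and then summing; in doing so it applies \Cref{lem:ellipsoid-inequality} (which bounds the difference of \emph{squared} ellipsoid norms) directly to the difference of unsquared norms, without justifying that passage. You instead telescope the operator-norm differences $\|\Lambda_{h,j}^{-1}-\Lambda_{h,j+1}^{-1}\|_{2\to2}$ \emph{first} to get $O_d(1/\sqrt{K})$, and only then apply the elementary inequality $|a-b|\leq\sqrt{|a^2-b^2|}$ to convert the squared-norm bound into an unsquared one, yielding the $O_d(\beta/K^{1/4})$ term cleanly. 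This ordering is both more rigorous (it fills the gap the paper leaves) and naturally explains why the rate degrades from $1/\sqrt{K}$ to $1/K^{1/4}$; had you taken the square root termwise before summing, the bonus part would instead contribute $\beta\sum_j 1/j$, which is too large.
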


\begin{proof}
For any $k \in [K]$, $h \in [H]$ and $(s, a) \in \cS \times \cA$, we have
\[
Q_{h,k}\cprn{s,a}
:= \min\cprn{w_{h,k}^\top\phi\cprn{s,a}+\beta\bars{\phi\cprn{s,a}}_{\Lambda_{h,k}^{-1}},H}.
\]
If both $Q_{h,k}(s,a)$ and $Q_{h,k+1}(s,a)$ are equal to $H$, the bound in the lemma is trivially satisfied for the $(s, a)$ pair.
Without loss of generality, we have that $Q_{h,k}(s,a)$ is equal to $w_{h,k}^\top\phi\cprn{s,a}+\beta\prn{\phi\cprn{s,a}\Lambda_{h,k}^{-1}\phi\cprn{s,a}}^{1/2}$.
Then we have
\begin{align}
& \abs{Q_{h,k}(s,a) - Q_{h,k+1}(s,a)}\nonumber\\
& \qquad \leq \abs{w_{h,k}^\top\phi\cprn{s,a}+\beta\bars{\phi\cprn{s,a}}_{\Lambda_{h,k}^{-1}}
-\,w_{h,k+1}^\top\phi\cprn{s,a}-\beta\bars{\phi\cprn{s,a}}_{\Lambda_{h,k+1}^{-1}}}\nonumber\\
& \qquad \leq \abs{(w_{h,k} - w_{h,k+1})^\top\phi\cprn{s,a}}
+\beta\abs{\bars{\phi\cprn{s,a}}_{\Lambda_{h,k}^{-1}} - \bars{\phi\cprn{s,a}}_{\Lambda_{h,k+1}^{-1}}}.
\label{eq:second-term}
\end{align}
By Cauchy-Schwarz, the first term of the RHS of \Cref{eq:second-term} is at most
\[
\bars{w_{h,k} - w_{h,k+1}}_2 \bars{\phi(s,a)}_2.
\]
By \Cref{lem:ellipsoid-inequality}, the second term of the RHS of \Cref{eq:second-term} is at most
\[
\beta \cdot \bars{\phi(s,a)}_2^2 \cdot \bars{\Lambda_{h,k}^{-1} - \Lambda_{h,k+1}^{-1}}_{2 \to 2}.
\]
Therefore, by the assumptions of the lemma, we get
\begin{align*}
\abs{Q_{h,k}(s,a) - Q_{h,k+1}(s,a)}
&\leq c_1 \bars{\phi(s,a)}_2 / k
+ c_2 \beta \bars{\phi(s,a)}_2^2 / k^2,
\end{align*}
whereby the RHS on this inequality is $O_d\cprn{1/k + \beta / k^2}$, by taking into account the fact that $\bars{\phi(s,a)}_2=O\cprn{\Tr\cprn{\Sigma}}$ with high probability, as $\phi(s,a)$ is drawn from the Gaussian distribution $\cN\cprn{0,\Sigma}$, and the assumption that $\Tr\cprn{\Sigma}=O_d\cprn{1}$.

Therefore, we have
\begin{align*}
\abs{Q_{h,k}(s,a) - Q_{h,k'}(s,a)}
\leq \sum_{i=k}^{k' - 1} \abs{Q_{h,i}(s,a) - Q_{h,i+1}(s,a)}
\leq \sum_{i=k}^{k' - 1} O_d\cprn{1/i + \beta / i^2}
\end{align*}
Note that when $i \geq K^{3/4}$, we get that $\beta/i^2 \leq \beta/K^{1.5}$.
Then, since $k \geq \sqrt{K}$ by assumption, we get
\[
\sum_{i=k}^{K} \frac{\beta}{i^2}
\leq K^{3/4} \frac{\beta}{K} + \frac{\beta}{\sqrt{K}}
= O\cprn{\frac{\beta}{K^{1 / 4}}}.
\]
Thus,
\begin{align*}
\abs{Q_{h,k}(s,a) - Q_{h,k'}(s,a)}
&\leq O_d\left(\log k' - \log k + \frac{\beta}{K^{1/4}}\right). 
\end{align*}
This concludes the proof.
\end{proof}

We define $\widehat{Q}$ to be the action-value function learned by \Cref{alg:LSVI-UCB_adaptive_reset}, whereas $Q$ denotes the action-value function used by LSVI-UCB (\Cref{alg:LSVI-UCB}).
Note that some of the values of $\widehat{Q}$ are identical to $Q$ (in the learning intervals) and some might not be (in the rest of the episodes).

We use \Cref{lem:weight_convergence} to show the closeness $Q$ and $\widehat{Q}$ in the non-learning intervals.
If episode $k \in [K]$ is in a non-learning interval, let $k^\prime < k$ denote the episode at the end of the last learning interval.
By construction of \Cref{alg:LSVI-UCB_adaptive_reset}, we have $\widehat{Q}_{h,k} = \widehat{Q}_{h,k^\prime} = Q_{h,k^\prime}$.
By applying \Cref{lem:weight_convergence}, we get the following corollary.

\begin{corollary}
\label{cor:action-value_difference}
If \Cref{alg:LSVI-UCB_fixed_reset} converges at $\sqrt{K}$, then for all  $k > \sqrt{K}$, we have that
\[
\bars{Q_{h,k}-\widehat{Q}_{h,k}}_\infty
\leq \sum_{\ell=k^\prime}^{k-1}\bars{Q_{h,\ell+1}-Q_{h,\ell}}_\infty
\leq O_d\cprn{\log k - \log k' + \frac{\beta}{K^{1 / 4}}},
\]
with high probability.
\end{corollary}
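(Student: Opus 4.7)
The plan is straightforward: in a non-learning episode the adaptive algorithm reuses the most recently learned action-value function, so the difference $Q_{h,k} - \widehat{Q}_{h,k}$ reduces to $Q_{h,k} - Q_{h,k'}$, where $k'$ indexes the last learning episode at step $h$. The first inequality in the corollary is then a telescoping application of the triangle inequality in $\bars{\cdot}_\infty$, while the second inequality is a direct invocation of \Cref{lem:weight_convergence} applied to the pair $(k', k)$.

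First I would unpack the definition of $\widehat Q_{h,k}$ from the pseudocode of \Cref{alg:LSVI-UCB_adaptive_reset}. In a non-learning iteration at step $h$, neither $w_h$ nor $\Lambda_h^{-1}$ is updated, so the action-value function at step $h$ stays equal to the one last produced. Writing $k'$ for the last episode where the $\textsc{Learn}$ condition fired at step $h$ (with the convention $k' = k$ if $k$ itself is a learning episode), this gives $\widehat Q_{h,k}(s,a) = Q_{h,k'}(s,a)$ pointwise, hence
\[
\bars{Q_{h,k} - \widehat Q_{h,k}}_\infty = \bars{Q_{h,k} - Q_{h,k'}}_\infty.
\]
When $k' = k$ both bounds of the corollary hold trivially, so I would assume $k' < k$ below.

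Next I would telescope in the sup norm,
\[
\bars{Q_{h,k} - Q_{h,k'}}_\infty \;\leq\; \sum_{\ell = k'}^{k-1} \bars{Q_{h,\ell+1} - Q_{h,\ell}}_\infty,
\]
which is precisely the first inequality in the statement. For the second inequality I would invoke \Cref{lem:weight_convergence} on the pair $(k', k)$ (after matching its convention $k' \geq k$ by swapping names), which, under the convergence-at-$\sqrt K$ hypothesis and $k' \geq \sqrt K$, yields $\bars{Q_{h,k} - Q_{h,k'}}_\infty \leq O_d\cprn{\log k - \log k' + \beta / K^{1/4}}$, matching the claimed bound.

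The only point requiring any care is ensuring $k' \geq \sqrt{K}$ so that the hypothesis of \Cref{lem:weith_convergence} applies at both endpoints. This is either read into the corollary (viewing the bound as informative only when $k$ is sufficiently deep inside a learning phase) or it is guaranteed by the reset schedule of \Cref{alg:LSVI-UCB_adaptive_reset}; in particular, the bound is meaningful when $k$ and $k'$ are close, so that $\log k - \log k'$ is small and the residual $\beta / K^{1/4}$ tail dominates. Beyond this bookkeeping, the corollary is a direct consequence of \Cref{lem:weight_convergence} and requires no new technical ingredients.
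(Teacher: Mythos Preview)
Your proposal is correct and follows essentially the same approach as the paper: the paper's argument, stated in the paragraph immediately preceding the corollary, is precisely that $\widehat{Q}_{h,k} = \widehat{Q}_{h,k'} = Q_{h,k'}$ by construction of \Cref{alg:LSVI-UCB_adaptive_reset}, followed by a direct application of \Cref{lem:weight_convergence}. Your added remark about needing $k' \geq \sqrt{K}$ for the lemma's hypothesis to apply is a valid point of care that the paper leaves implicit.
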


Finally, we bound the maximum difference between $Q$ and $\widehat{Q}$.

\begin{lemma}
\label{lem:action-value_difference}
For all $\varepsilon$, assuming $\bars{Q-\widehat{Q}}_\infty\leq\varepsilon$ (with high probability), if it is the case that $\prn{s',a'}=\arg\max_{\prn{s,a}}Q\cprn{s,a}$ and $\prn{s'',a''}=\arg\max_{\prn{s,a}}\widehat{Q}\cprn{s,a}$, then it is the case that $\abs{Q\cprn{s',a'}-\widehat{Q}\cprn{s'',a''}}\leq\varepsilon$ (with high probability).
\end{lemma}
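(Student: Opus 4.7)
The plan is to invoke the standard fact that uniformly close functions have close maxima, i.e., $\bigl|\max Q - \max \widehat Q\bigr| \le \|Q - \widehat Q\|_\infty$, and then apply this to the two argmax pairs in the statement. The hypothesis gives pointwise closeness, and everything else follows from a two-line sandwiching argument, so there is really no serious obstacle here; the only point worth being careful about is that the high-probability event on which $\|Q - \widehat Q\|_\infty \le \varepsilon$ holds is the same event on which the conclusion is asserted, so no additional union bound is needed.

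Concretely, I would first expand the definitions: $(s',a')$ maximizes $Q$ over $\cS\times\cA$, so $Q(s',a') = \max_{(s,a)} Q(s,a)$, and similarly $\widehat Q(s'',a'') = \max_{(s,a)} \widehat Q(s,a)$. Next, working on the event $\{\|Q - \widehat Q\|_\infty \le \varepsilon\}$, I would use the optimality of $(s',a')$ for $Q$ together with the pointwise bound applied at $(s'',a'')$ to obtain
\[
Q(s',a') \;\ge\; Q(s'',a'') \;\ge\; \widehat Q(s'',a'') - \varepsilon,
\]
which yields $Q(s',a') - \widehat Q(s'',a'') \ge -\varepsilon$.

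Then, by symmetry, I would use the optimality of $(s'',a'')$ for $\widehat Q$ together with the pointwise bound applied at $(s',a')$ to get
\[
\widehat Q(s'',a'') \;\ge\; \widehat Q(s',a') \;\ge\; Q(s',a') - \varepsilon,
\]
which gives $Q(s',a') - \widehat Q(s'',a'') \le \varepsilon$. Combining the two one-sided bounds yields $\bigl|Q(s',a') - \widehat Q(s'',a'')\bigr| \le \varepsilon$ on the high-probability event, which is precisely the claim.
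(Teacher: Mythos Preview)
Your proof is correct and follows essentially the same two-sided sandwiching argument as the paper: both directions use the optimality of one argmax together with the pointwise $\varepsilon$-closeness applied at the other argmax point. If anything, your write-up is cleaner than the paper's, which states the same inequalities but with slightly more awkward phrasing.
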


\begin{proof}
We have $\widehat{Q}\cprn{s',a'} \in \sqbra{{Q}(s',a') - \varepsilon,{Q}(s',a') + \varepsilon}$ by assumption, therefore $\widehat{Q}(s'',a'') \geq Q(s',a') - \varepsilon$ since $(s'',a'')$ gives the maximum value of $\widehat{Q}$ by definition.

Similarly, $Q\cprn{s'',a''} \in \sqbra{\widehat{Q}(s'',a'') - \varepsilon,\widehat{Q}(s'',a'') + \varepsilon}$, so that $Q(s',a') \geq \widehat{Q}(s'',a'') - \varepsilon$ if and only if $\widehat{Q}(s'',a'') \leq \widehat{Q}(s',a') + \varepsilon$.
This concludes the proof.
\end{proof}

That is, we have shown that $\bars{\Lambda_{h,k}^{-1} - \Lambda_{h,k + 1}^{-1}}_{2 \to 2} \leq c / k^2$ (for sufficiently large $k$) implies that the action-value function of \Cref{alg:LSVI-UCB_adaptive_reset}, namely $\widehat{Q}$, is close to the action-value function of \Cref{alg:LSVI-UCB}, namely $Q$.

\subsubsection{Convergence of \texorpdfstring{$\Lambda_{h, k}^{-1}$}{} Under Some Assumptions}

We begin by stating some assumptions.

\begin{assumption}
\label{assump:fixed_transition}
At each step $h$ in each episode $k$, the state $s_{h,k}$ is sampled from a distribution $p_\cS$ on $\cS$.
That is, for all $(s, a)$ and $h$ it is the case that $\P_h(s, a) = p_\cS$.
Also, either $(a)$ $\cS$ is non-discrete (e.g., $\R^{s}$) and $p_\cS$ is a continuous distribution on $\cS$, or $(b)$ $\cS$ is finite and $\bars{p_\cS}_2^2 \leq 1/T^4$.
\end{assumption}

Note that \Cref{assump:fixed_transition} is not (necessarily) true for general linear MDPs, but it is motivated by the notion of \emph{occupancy measure} $\nu^\pi = \sum_{t = 0}^\infty \gamma^t \P^\pi\cprn{s_t, a_t}$ (for some discount factor $0 < \gamma < 1$), relative to a policy $\pi$, and the potential practical use of generative models (such as deep learning networks) to sample states given some state-action pair as input.

\begin{assumption}
\label{assump:gaussian_features}
For each $(s, a) \in \cS \times \cA$, the feature-vector $\phi(s, a)\in\R^d$ is drawn independently from a non-degenerate Gaussian distribution $\cN(0, \Sigma)$, where $\Sigma \succ 0$ is positive-definite matrix such that $\lambda_{\min}(\Sigma) = c_{\min} > 0$.
\end{assumption}

We now show the following straightforward proposition.

\begin{proposition}
\label{lem:lambda_convergence}
With probability at least $1 - 1/K$, for all $h \in [H]$ and $k \in [K-1]$, we have
\[
\bars{\Lambda_{h,k}^{-1} - \Lambda_{h,k + 1}^{-1}}_{2 \to 2}
\leq O_d(1/k^2).
\]
\end{proposition}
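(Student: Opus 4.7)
The plan is to combine the Sherman-Morrison formula (\Cref{thm:sherman-morrison}) with the minimum-eigenvalue concentration of \Cref{lem:gaussian_vec-matrix_min_eigenvalue_convergence}. Writing $\phi_k := \phi(s_{h,k},a_{h,k})$ and noting that $\Lambda_{h,k+1} = \Lambda_{h,k} + \phi_k \phi_k^\top$, \Cref{thm:sherman-morrison} yields
\[
\Lambda_{h,k+1}^{-1} - \Lambda_{h,k}^{-1}
= -\frac{\Lambda_{h,k}^{-1}\phi_k \phi_k^\top \Lambda_{h,k}^{-1}}{1 + \phi_k^\top \Lambda_{h,k}^{-1}\phi_k}.
\]
The numerator is positive semidefinite and rank one, so its operator norm equals $\|\Lambda_{h,k}^{-1}\phi_k\|_2^2$, while the denominator is at least $1$ since $\Lambda_{h,k}^{-1}\succeq 0$. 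Thus the task reduces to showing
\[
\|\Lambda_{h,k}^{-1}\phi_k\|_2^2 \leq \|\Lambda_{h,k}^{-1}\|_{2\to 2}^2\,\|\phi_k\|_2^2 \leq O_d(1/k^2)
\]
with high probability, by Cauchy--Schwarz and submultiplicativity.

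The main obstacle is the potential dependence among $\phi_1,\ldots,\phi_{k-1}$: the action $a_{h,i}$ is chosen by a policy that depends on the past history (and thus on past features), so the pairs $(s_{h,i},a_{h,i})$ are not a priori independent. To sidestep this, I would use the no-collision arguments of \Cref{lem:subsampling_discrete} and \Cref{lem:subsampling_continuous}. By \Cref{assump:fixed_transition}, the states $s_{h,i}$ are drawn i.i.d. from $p_\cS$. In case (a), \Cref{lem:subsampling_continuous} gives pairwise-distinct state-action pairs with probability $1$; in case (b), applying \Cref{lem:subsampling_discrete} with $m = T$ and $\|p_\cS\|_2^2 \leq 1/T^4$ gives the same conclusion up to failure probability $O(1/T^2)$. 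Conditional on the no-collision event, \Cref{assump:gaussian_features} guarantees that each $\phi_i = \phi(s_{h,i},a_{h,i})$ is a fresh draw from $\cN(0,\Sigma)$, independent of the pairs and of the previous features, because the $\phi(s,a)$ samples are drawn independently over the support. Hence $(\phi_i)_{i<k}$ is i.i.d. $\cN(0,\Sigma)$ on that event.

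Given this i.i.d. reduction, the rest is routine concentration. For $k$ above a threshold $k_0 = \Omega(d)$ (so that $d/k$ is sufficiently small), \Cref{lem:gaussian_vec-matrix_min_eigenvalue_convergence} yields
\[
\lambda_{\min}\Bigl(\sum_{i=1}^{k-1}\phi_i\phi_i^\top\Bigr) \geq (k-1)\,c_{\min}/100,
\]
so $\|\Lambda_{h,k}^{-1}\|_{2\to 2} = O_d(1/k)$. A standard Gaussian tail bound together with $\Tr(\Sigma) = O_d(1)$ gives $\|\phi_k\|_2 = O_d(1)$ off a small-probability event. Combining the two gives $\|\Lambda_{h,k+1}^{-1} - \Lambda_{h,k}^{-1}\|_{2\to 2} = O_d(1/k^2)$ for $k \geq k_0$. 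For the $O_d(1)$ many small indices $k < k_0$, the crude bound $\|\Lambda_{h,k}^{-1}\|_{2\to 2} \leq 1/\lambda$ combined with $1/k^2 \geq \Omega_d(1)$ absorbs into the hidden $O_d$-constant. Finally, a union bound over $h \in [H]$ and $k \in [K-1]$---with each per-pair failure probability driven down by the exponential tails of \Cref{lem:gaussian_vec-matrix_min_eigenvalue_convergence} and the Gaussian concentration---delivers the claimed $1 - 1/K$ success probability.
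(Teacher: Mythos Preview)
Your proposal is correct and follows essentially the same route as the paper's proof: Sherman--Morrison to express the difference, the rank-one bound $\|\Lambda_{h,k}^{-1}\phi_k\|_2^2 \le \|\Lambda_{h,k}^{-1}\|_{2\to2}^2\|\phi_k\|_2^2$, the subsampling lemmas (\Cref{lem:subsampling_discrete}/\Cref{lem:subsampling_continuous}) under \Cref{assump:fixed_transition} and \Cref{assump:gaussian_features} to obtain i.i.d.\ Gaussian features, then \Cref{lem:gaussian_vec-matrix_min_eigenvalue_convergence} for $\lambda_{\min}(\Lambda_{h,k})=\Omega(k)$ and Gaussian norm concentration for $\|\phi_k\|_2$. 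If anything, you are slightly more careful than the paper in separately handling the small-$k$ regime and in making the union bound over $(h,k)$ explicit.
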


\begin{proof}
By \Cref{assump:fixed_transition} and \Cref{assump:gaussian_features}, using either \Cref{lem:subsampling_discrete} or \Cref{lem:subsampling_continuous} (depending on whether $\cS$ is discrete or continuous), for any $h \in [H]$, with probability at least $1 - 1/T^2$, the values $\phi(s_{h,1}, a_{h,1}),\dots,\phi(s_{h,K},a_{h,K})$ are i.i.d samples from $\cN(\mu,\Sigma)$.
By a union bound, this holds for all $h$ simultaneously with probability $1 - 1/K^2$.

By construction, we have that $\Lambda_{h,k+1} = \Lambda_{h,k} + \phi(s_{h,k}, a_{h,k}) \phi(s_{h,k}, a_{h,k})^\top$, so the Sherman-Morrison-Woodbury identity gives, for $u := \phi(s_{h,k}, a_{h,k})$ and $w := \Lambda_{h,k}^{-1} \phi(s_{h,k}, a_{h,k})$, that
\begin{align*}
\bars{\Lambda_{h,k}^{-1} - \Lambda_{h,k + 1}^{-1}}_{2 \to 2}
&= \bars{\frac{\Lambda_{h,k}^{-1} u u^\top \Lambda_{h,k}^{-1}}{1 + u^\top \Lambda_{h,k}^{-1} u}}_{2 \to 2}\\
&= \bars{\frac{w w^\top}{1 + u^\top \Lambda_{h,k}^{-1} u}}_{2 \to 2}\\
&= \frac{\bars{w w^\top}_{2 \to 2}}{{\bars{1 + u^\top \Lambda_{h,k}^{-1} u}}_{2 \to 2}}\\
&\leq \bars{w w^\top}_{2 \to 2}\\
&=\|w\|_2^2\\
&=\|\Lambda_{h,k}^{-1} \cdot \phi(s_{h,k}, a_{h,k})\|_2^2\\
&\leq \bars{\Lambda_{h,k}^{-1}}_{2 \to 2}^2 \bars{\phi(s_{h,k},a_{h,k})}_2^2,
\end{align*}
by the fact that $\bars{1 + u^\top \Lambda_{h,k}^{-1} u}_{2 \to 2}\geq1$.

What is left is to bound the quantities $\bars{\Lambda_{h,k}^{-1}}_{2 \to 2}^2$ and $\bars{\phi(s_{h,k},a_{h,k})}_2^2$.
To bound $\bars{\Lambda_{h,k}^{-1}}_{2 \to 2}^2$ our approach is to lower bound the smallest eigenvalue of $\Lambda_{h,k}$ by $\Omega(k)$, with high probability.
By the definition of $\Lambda_{h,k}$ (see \Cref{alg:LSVI-UCB_fixed_reset}) and \Cref{lem:gaussian_vec-matrix_min_eigenvalue_convergence}, we get that the minimum eigenvalue of $\Lambda_{h,k}$ is at least $\prn{k-1}\lambda_{\min}\cprn{\Sigma}+\lambda=\Omega\cprn{k}$ (by appropriately setting $\lambda$).

We now turn to bounding $\bars{\phi\cprn{s_{h,k},a_{h,k}}}_2$.
Since $\phi\cprn{s_{h,k},a_{h,k}}\sim\cN\cprn{\mu,\Sigma}$, it is a standard result that $\bars{\phi\cprn{s_{h,k},a_{h,k}}}_2=O\cprn{\sqrt{\Tr\cprn{\Sigma}}}$, with high probability.
The result follows from the fact that $\Tr\cprn{\Sigma}=O_d\cprn{1}$.
\end{proof}

\section{Other Plots}

\label{sec:plots}

In \Cref{fig:regret_fixed_param} and \Cref{fig:regret_adapt_param} we examine the effects of various tuning parameters on the regret, for \textsc{LSVI-UCB-Fixed} and \textsc{LSVI-UCB-Adaptive}, respectively. Note that there is no significant effect on the regret.

\begin{figure}[ht]
\centering
\includegraphics[width=0.65\columnwidth]{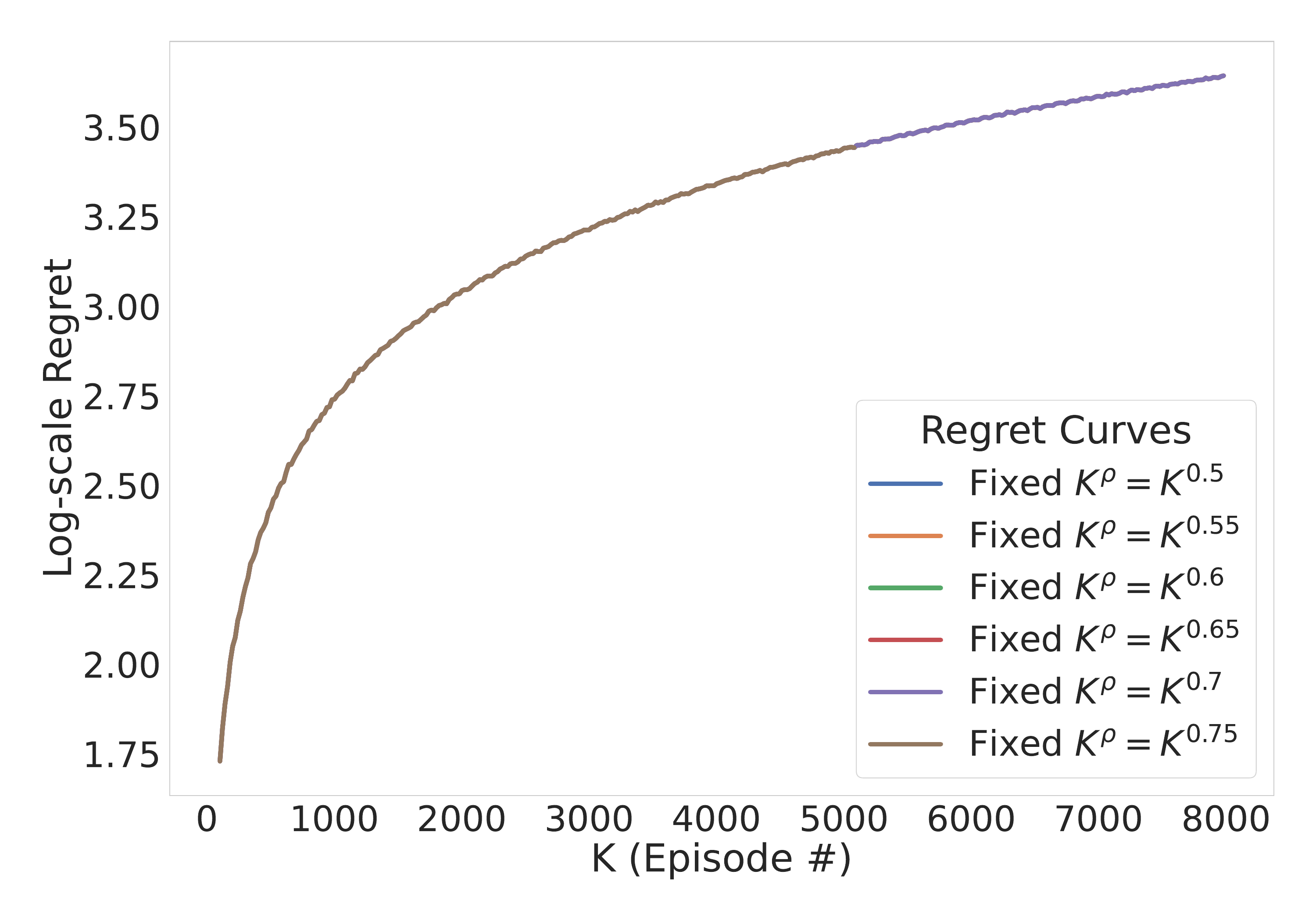}
\caption{Synthetic data: Regret of \textsc{LSVI-UCB-Fixed} as a function of parameters.}
\label{fig:regret_fixed_param}
\end{figure}

\begin{figure}[ht]
\centering
\includegraphics[width=0.65\columnwidth]{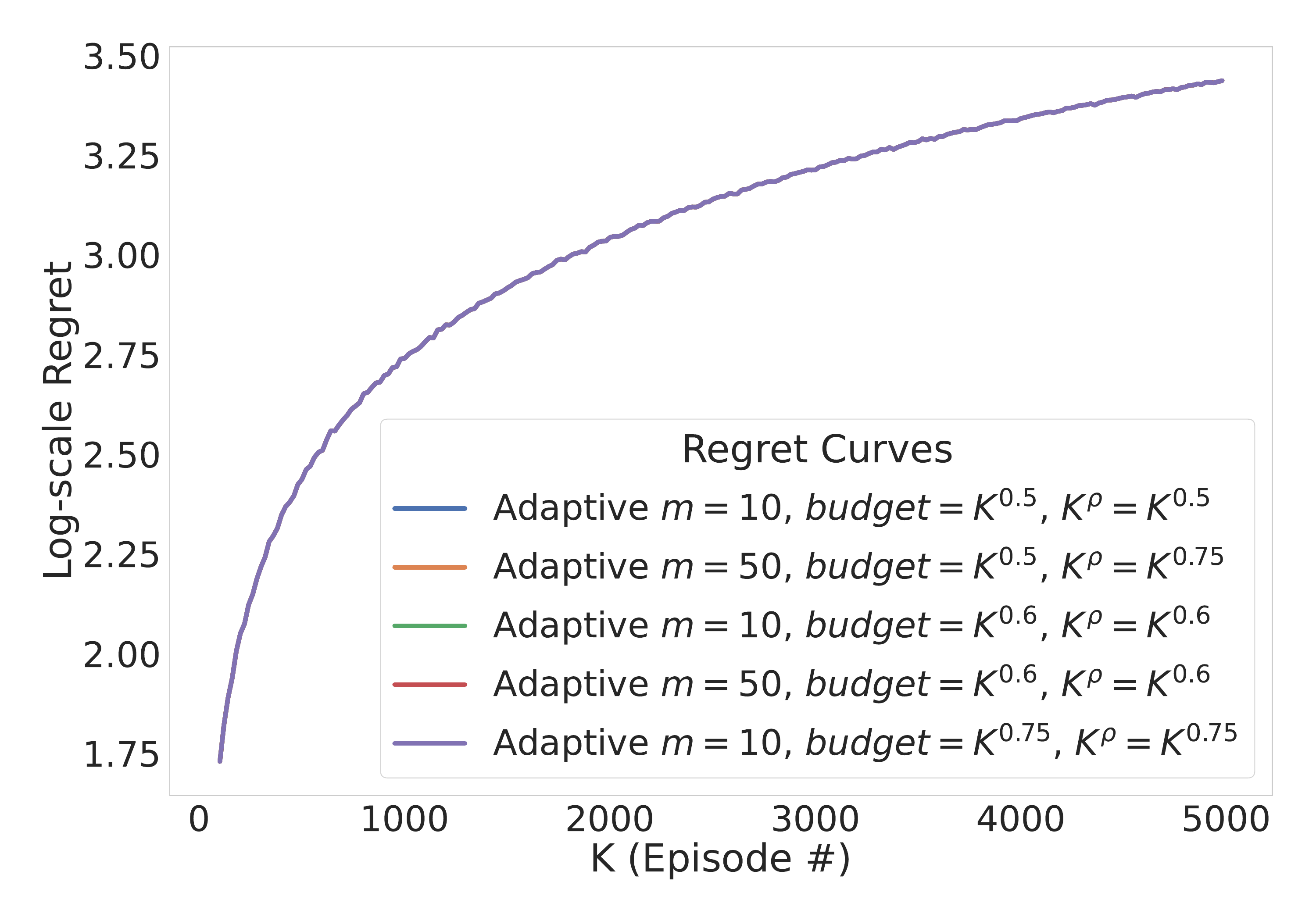}
\caption{Synthetic data: Regret of \textsc{LSVI-UCB-Adaptive} as a function of parameters.}
\label{fig:regret_adapt_param}
\end{figure}

In \Cref{fig:process_time_fixed_param} and \Cref{fig:process_time_adapt_param} we inspect the effects of various tuning parameters on the process time, for \textsc{LSVI-UCB-Fixed} and \textsc{LSVI-UCB-Adaptive}, respectively.

\begin{figure}[ht]
\centering
\includegraphics[width=0.65\columnwidth]{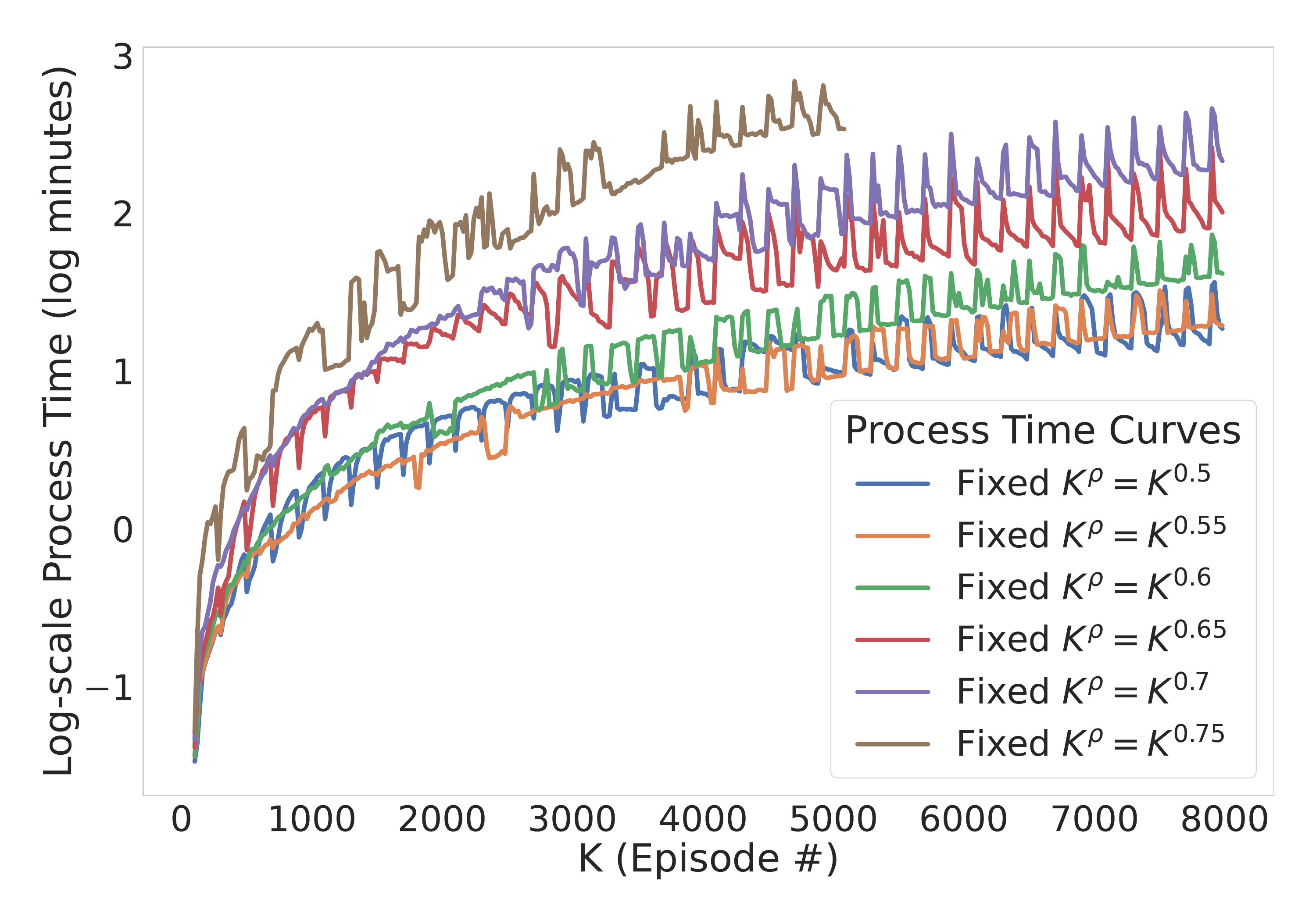}
\caption{Synthetic data: Process time of \textsc{LSVI-UCB-Fixed} as a function of parameters.}
\label{fig:process_time_fixed_param}
\end{figure}

\begin{figure}[ht]
\centering
\includegraphics[width=0.65\columnwidth]{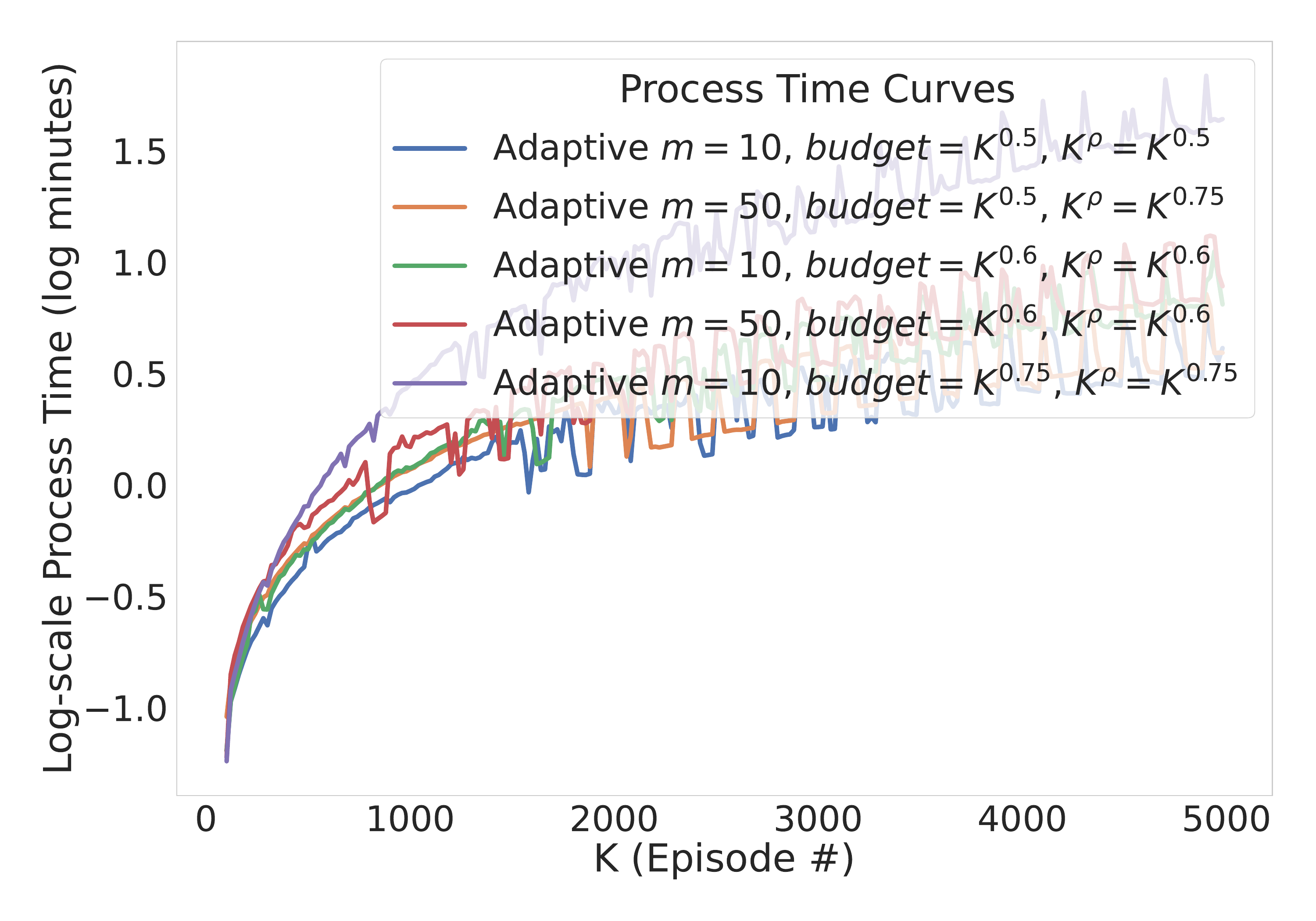}
\caption{Synthetic data: Process time of \textsc{LSVI-UCB-Adaptive} as a function of parameters.}
\label{fig:process_time_adapt_param}
\end{figure}

For \textsc{LSVI-UCB-Fixed}, we can see that, unsurprisingly, the time needed increases with the setting of the power of $K$, so the best running time is achieved when the learning intervals are of length $\sqrt{K}$.
In the case of \textsc{LSVI-UCB-Adaptive}, the process time is more stable with respect to the parameter $\tau$, however a budget of $\sqrt{K}$ seems the best.

\end{document}